\newcommand*{\addFileDependency}[1]{
  \typeout{(#1)}
  \@addtofilelist{#1}
  \IfFileExists{#1}{}{\typeout{No file #1.}}
}
\crefname{equation}{}{}
\theoremstyle{plain}
\newtheorem{theorem}{Theorem}
\newtheorem{proof}{Proof}
\newtheorem{thm}{Theorem}[section]
\newtheorem{claim}{Claim}
\newtheorem{prop}{Proposition}[section]
\newtheorem{defn}{Definition}[section]
\newcommand{\norm}[1]{\left\| #1 \right\|}
\newcommand{\card}[1]{\left| #1 \right|}
\newcommand{\paren}[1]{\left( #1 \right)}
\newcommand{\sqbra}[1]{\left[#1 \right]}
\newcommand{\inprod}[2]{\left\langle #1,\ #2 \right\rangle}
\newcommand{\R}{\mathbb{R}}
\newcommand{\N}{\mathbb{N}}
\newcommand{\E}{\mathbb{E}}
\newcommand{\ZZ}{\Sigma}
\author{%
  Akinori ~Tanaka
    \\
  RIKEN AIP, Keio University\\
  \texttt{akinori.tanaka@riken.jp} \\
   \And
   Akiyoshi Sannai \\
   RIKEN AIP, Keio University\\
   \texttt{akiyoshi.sannai@riken.jp} \\
   \AND
   Ken Kobayashi \\
   Fujitsu Laboratories LTD., RIKEN AIP, Tokyo Tech\\
   \texttt{ken-kobayashi@fujitsu.com} \\
   \And
   Naoki Hamada \\
   Fujitsu Laboratories LTD., RIKEN AIP\\
   \texttt{hamada-naoki@fujitsu.com} \\
}
\title{Asymptotic Risk of B\'ezier Simplex Fitting}
\begin{document}
\maketitle

\begin{abstract}
The B\'ezier simplex fitting is a novel data modeling technique which exploits geometric structures of data to approximate the Pareto front of multi-objective optimization problems.
There are two fitting methods based on different sampling strategies.
The \emph{inductive skeleton fitting} employs a stratified subsampling from each skeleton of a simplex, whereas the \emph{all-at-once fitting} uses a non-stratified sampling which treats a simplex as a whole.
In this paper, we analyze the asymptotic risks of those B\'ezier simplex fitting methods and derive the optimal subsample ratio for the inductive skeleton fitting.
It is shown that the inductive skeleton fitting with the optimal ratio has a smaller risk when the degree of a B\'ezier simplex is less than three.
Those results are verified numerically under small to moderate sample sizes.
In addition, we provide two complementary applications of our theory: a generalized location problem and a multi-objective hyper-parameter tuning of the group lasso.
The former can be represented by a B\'ezier simplex of degree two where the inductive skeleton fitting outperforms.
The latter can be represented by a B\'ezier simplex of degree three where the all-at-once fitting gets an advantage.
\end{abstract}

\section{Introduction}
Given functions $f_1,\dots,f_M: X \to \R$ on a subset $X$ of the Euclidean space $\R^N$, consider the multi-objective optimization problem
\[
\text{minimize } f(x) := (f_1(x), \dots, f_M(x)) \text{ subject to } x \in X (\subseteq \R^N)
\]
with respect to the Pareto ordering: $x \prec y \xLeftrightarrow{\mathrm{def}} \forall i \sqbra{f_i(x) \leq f_i(y)} \land \exists j \sqbra{f_j(x) < f_j(y)}$.
The goal is to find the \emph{Pareto set} and its image, called the \emph{Pareto front}, which are denoted by
\[
X^*(f) := \Set{x \in X | \forall y \in X \sqbra{y \not\prec x}}
\quad \text{and} \quad
f(X^*(f)) := \Set{f(x) \in \R^M | x \in X^*(f)},
\]
respectively.
Since most numerical optimization approaches give a finite number of points as an approximation of those objects (e.g., goal programming \cite{Miettinen1999,Eichfelder2008}, evolutionary computation \cite{Deb2001,Zhang2007,Deb2014}, homotopy methods \cite{Hillermeier2001,Harada2007}, Bayesian optimization \cite{Hernandez-Lobato2016,Yang2019}), the complete shapes of them are usually not revealed.
To amplify the knowledge extracted from their point approximations, we consider in this paper a fitting problem of the Pareto set and front.

It is known that those objects often have skeleton structures that can be used to enhance fitting accuracy.
An $M$-objective problem is \emph{simplicial} if the Pareto set and front are homeomorphic to an $(M-1)$-dimensional simplex and each $(m-1)$-dimensional subsimplex corresponds to the Pareto set of an $m$-objective subproblem for all $0 \le m \le M$ (see \cite{Hamada2019} for precise definition and examples).
There are a lot of practical problems being simplicial: location problems \cite{Kuhn1967} and a phenotypic divergence model in evolutionary biology \cite{Shoval2012} are shown to be simplicial, and an airplane design \cite{Mastroddi2013} and a hydrologic modeling \cite{Vrugt2003} hold numerical solutions which imply those problems are simplicial.
The Pareto set and front of any simplicial problem can be approximated with arbitrary accuracy by a B\'ezier simplex of an appropriate degree~\cite{Kobayashi2019}.
There are two fitting algorithms for B\'ezier simplices: the all-at-once fitting is a na\"ive extension of Borges-Pastva algorithm for B\'ezier curves~\cite{Borges2002}, and the inductive skeleton fitting~\cite{Kobayashi2019} exploits the skeleton structure of simplicial problems discussed above.

An important problem class which is (generically) simplicial is the strongly convex problem.
It has been shown that many practical problems can be considered as strongly convex via appropriate transformations preserving the essential problem structure, i.e., the Pareto ordering and the topology~\cite{Hamada2019}.
For example, the multi-objective location problem \cite{Kuhn1967} can be strongly convex by squaring each objective function.
The resulting problem has a Pareto front that can be represented by a B\'ezier simplex of degree two \cite{Hamada2019}.
As we will show in this paper, the group lasso \cite{Yuan2006} can be reformulated as a simplicial problem.
It has a cubic Pareto front that requires a B\'ezier simplex of degree three.
The same transformation can be applied to a broad range of sparse learning methods, including the (original) lasso \cite{Tibshirani1996}, the fused lasso \cite{Tibshirani2005}, the smooth lasso \cite{Hebiri2011}, and the elastic net \cite{Zou2005}.
Since the required degree is problem-dependent, we need to understand the performance of the two B\'ezier simplex fittings with respect to the degree.

In this paper, we study the asymptotic risk of the two fitting methods of the B\'ezier simplex: the all-at-once fitting and the inductive skeleton fitting, and compare their performance with respect to the degree.

Our contributions are as follows:
\begin{itemize}
\item We have evaluated the asymptotic $\ell_2$-risk, as the sample size tends to infinity, of two B\'ezier simplex fitting methods: the all-at-once fitting and the inductive skeleton fitting.
\item In terms of minimizing the asymptotic risk, we have derived the optimal ratio of subsample sizes for the inductive skeleton fitting.
\item We have shown when the inductive skeleton fitting with optimal ratio outperforms the all-at-once fitting when the degree of a B\'ezier simplex is two, whereas the all-at-once has an advantage at degree three.
\item We have demonstrated that the location problem and the group lasso are transformed into strongly convex problems, and their Pareto fronts are approximated by a B\'ezier simplex, which numerically verifies the asymptotic results.
\end{itemize}

The rest of this paper is organized as follows:
\cref{sec:problem-definition} describes the problem definition.
\Cref{sec:asymptotic-risk} analyzes the asymptotic risks of the all-at-once fitting and the all-at-once fitting.
For the inductive skeleton fitting, the optimal subsample ratio in terms of minimizing the risk is derived.
Those analyses are verified in \cref{sec:numerical-examples} via numerical experiments.
\Cref{sec:conclusion} concludes the paper and addresses future work.

\section{Problem definition}\label{sec:problem-definition}
Let $M$ be a non-negative integer.
The \emph{standard $(M - 1)$-simplex} is defined by
\[
\Delta^{M - 1} = \Set{(w_1, \dots, w_M) \in \R^M | \sum_{m = 1}^M w_m = 1,\ w_m \geq 0}.
\]
For an index set $I \subseteq \set{1, \dots, M}$, we define the \emph{$I$-subsimplex} of $\Delta^{M - 1}$ by $\Delta^I = \set{(w_1, \dots, w_M) \in \Delta^{M - 1} | w_m = 0\ (m \not \in I)}$.
For an integer $0 \leq m \leq M$, the \emph{$(m - 1)$-skeleton} of $\Delta^{M - 1}$ is defined by
\[
\Delta^{(m-1)} = \bigcup_{I \subseteq \set{1, \dots, M} \text{ s.t. } \card{I}=m} \Delta^{M-1}_I.
\]

\subsection{B\'ezier simplex and its fitting methods}\label{sec:bezier-simplex}
Let $\N$ be the set of non-negative integers (including zero!) and $M, D \in \N$.
We denote a simplex lattice by $\N_D^M := \set{(d_1,\dots,d_M) \in \N^M | \sum_{m=1}^M d_m = D}$.
Given the \emph{control points} $\bm p_{\bm d} \in \R^L$ $(\bm d \in \N_D^M)$, an \emph{$(M - 1)$-B\'ezier simplex of degree $D$} is a mapping $\bm b(\bm t): \Delta^{M-1}\to\R^L$ defined by
\begin{equation}\label{eqn:bezier-simplex}
\bm b(\bm t) := \sum_{\bm d\in\N_D^M} \binom{D}{\bm d} \bm t^{\bm d} \bm p_{\bm d}
\end{equation}
where $\binom{D}{\bm d} := \frac{D!}{d_1! d_2! \cdots d_M!}$, and for each $\bm t := (t_1, \dots, t_M) \in \R^M$ and $\bm d := (d_1, \dots, d_M) \in \N^M$, $\bm t^{\bm d}$ is a monomial $t^{d_1}_1 t^{d_2}_2 \cdots t^{d_M}_M$.

Kobayashi \text{et al.} \cite{Kobayashi2019} proposed two B\'ezier simplex fitting algorithms: the all-at-once fitting and the inductive skeleton fitting.
They are different in not only fitting algorithm but also sampling strategy.
The all-at-once fitting requires a training set $S_N:=\set{(\bm t_n, \bm x_n) \in \Delta^{M-1} \times \R^L | n = 1, \dots, N}$ and adjusts all control points at once by minimizing the ordinary least square loss: $\frac{1}{N}\sum_{n=1}^N \norm{\bm x_n - \bm b(\bm t_n^{(m)})}^2$.

The inductive skeleton fitting, on the other hand, requires skeleton-wise sampled training sets $S_{N^{(m)}} := \set{(\bm t^{(m)}_n, \bm x^{(m)}_n) \in \Delta^{(m)} \times \R^L | n = 1, \dots, N^{(m)}}~(m=0,\ldots,M-1)$.
It also divides control points as $\bm p_{\bm d}^{(m)}$ such that $\bm d$ has $m+1$ non-zero elements.
Such $\bm p_{\bm d}^{(m)}$ determine $m$-skeleton of a B\'ezier simplex.
The inductive skeleton fitting inductively adjusts $\bm p_{\bm d}^{(m)}$ from $m = 0$ to $M - 1$ by minimizing the ordinary least square loss of the $m$-skeleton $\frac{1}{N^{(m)}}\sum_{n=1}^{N^{(m)}} \norm{\bm x^{(m)}_n - \bm b(\bm t_n)}^2$.

\subsection{The \texorpdfstring{$\ell_2$}{l2}-risk}
The fitting problem considered in this paper is as follows.
The sample is taken from an unknown B\'ezier simplex $\bm b(\bm t): \Delta^{M-1} \to \R^L$ with additive Gaussian noise $\bm \varepsilon \sim N(\bm 0, \sigma^2 \bm I)$, that is, $\bm x = \bm b(\bm t) + \bm \varepsilon$.
For the all-at-once fitting, $S_N = \set{(\bm t_n, \bm x_n)}$ follows the uniform distribution on the domain of the B\'ezier simplex: $\bm t_n \sim U(\Delta^{M-1})$ and $\bm x_n = \bm b(\bm t_n) + \bm \varepsilon_n$.
For the inductive skeleton fitting, $S_{N^{(m)}} = \set{(\bm t_n^{(m)}, \bm x_n^{(m)})}$ follows the uniform distribution on the $m$-skeleton of the domain of the B\'ezier simplex: $\bm t_n^{(m)} \sim U(\Delta^{(m)})$ and $\bm x_n^{(m)} = \bm b(\bm t_n^{(m)}) + \bm \varepsilon_n^{(m)}$.
A B\'ezier simplex estimated from $S_N$ is denoted by $\bm{\hat b}(\bm t | S_N)$.
For both method, we asymptotically evaluate the $\ell_2$-risk below as $N \to \infty$.
\begin{equation}\label{eqn:risk-def}
R_N := \E_{S_N}\sqbra{\E_{\bm t \sim U(\Delta^{M-1})} \norm{\bm b(\bm t) - \hat{\bm b}(\bm t | S_N)}^2}.
\end{equation}
Here, we put $S_N = S_{N^{(0)}} \cup \dots \cup S_{N^{(M-1)}}$ for the inductive skeleton fitting.

\section{Asymptotic risk of B\'ezier simplex fitting}\label{sec:asymptotic-risk}
To calculate the risk of each fitting scheme, let us first focus on the fact: the summation/subtraction of two B\'ezier simplices is also B\'ezier simplex.
In the definition \cref{eqn:risk}, we have $\bm b(\bm t) - \hat{\bm b}(\bm t | S_N)$ which measures the difference between the target B\'ezier simplex and the model B\'ezier simplex.
By using the above fact, $\bm b(\bm t) - \hat{\bm b}(\bm t | S_N)$ is also B\'ezier simplex.
Let us call its control point as $\bm p'$, and consider the following matrix,
\begin{align}
\bm P
=
\begin{bmatrix}
(\bm p_1')_1 &
(\bm p_1')_2 &
\cdots &
(\bm p_1')_L
\\
(\bm p_2')_1 &
(\bm p_2')_2 &
\cdots &
(\bm p_2')_L
\\
\quad \vdots&
\quad \vdots &
\ddots &
\quad \vdots
\\
(\bm p_{\card{\N_D^M}}')_1 &
(\bm p_{\card{\N_D^M}}')_2 &
\cdots &
(\bm p_{\card{\N_D^M}}')_L
\end{bmatrix}
,
\end{align}
where $(\bm p_A)_{l}$ means the $l$-th component of the $A$-th control point vector $\bm p_A$.
The asymptotic rick can be calculated by the following theorem.
\begin{theorem}\label{thm:risk}
The risk of the B\'ezier simplex fitting can be represented by
\begin{align}
R_N
=
\sum_{\bm d_A, \bm d_B \in \mathbb{N}_D^M}
\ZZ_{AB}
\E_{S_N} \sqbra{({\bm P} {\bm P}^\top)_{AB}}
,
\label{eqn:risk}
\end{align}
where the matrix $\ZZ$ is defined by
\begin{align}
\ZZ_{AB}
=
\frac{(2D)!(M-1)!}{(2D+M-1)!}
  \binom{D}{\bm d_A}
  \binom{D}{\bm d_B}
  \binom{2D}{\bm d_A + \bm d_B}^{-1}
  \label{eqn:ZZ-def}
\end{align}
\end{theorem}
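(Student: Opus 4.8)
The plan is to start from the definition \cref{eqn:risk-def} and exploit the stated fact that the difference $\bm b(\bm t) - \hat{\bm b}(\bm t | S_N)$ is again a B\'ezier simplex, with control points $\bm p'_A$ indexed by $\bm d_A \in \N_D^M$. Substituting its expansion $\sum_A \binom{D}{\bm d_A} \bm t^{\bm d_A} \bm p'_A$ into the squared norm and multiplying out, the integrand becomes the double sum
\begin{equation}
\norm{\bm b(\bm t) - \hat{\bm b}(\bm t | S_N)}^2
= \sum_{\bm d_A, \bm d_B \in \N_D^M} \binom{D}{\bm d_A}\binom{D}{\bm d_B}\, \bm t^{\bm d_A + \bm d_B}\, \inprod{\bm p'_A}{\bm p'_B},
\end{equation}
where I have used $\bm t^{\bm d_A}\bm t^{\bm d_B} = \bm t^{\bm d_A + \bm d_B}$ and identified the inner product $\inprod{\bm p'_A}{\bm p'_B} = \sum_{l=1}^L (\bm p'_A)_l (\bm p'_B)_l = (\bm P \bm P^\top)_{AB}$ straight from the definition of $\bm P$.

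Next I would push the inner expectation $\E_{\bm t \sim U(\Delta^{M-1})}$ through the finite sum, so the only analytic ingredient is the monomial moment $\E_{\bm t}[\bm t^{\bm d_A + \bm d_B}]$. This is a Dirichlet integral over the standard simplex: for any $\bm c = (c_1, \dots, c_M) \in \N^M$,
\begin{equation}
\E_{\bm t \sim U(\Delta^{M-1})}[\bm t^{\bm c}]
= (M-1)!\, \frac{\prod_{m=1}^M c_m!}{(\card{\bm c} + M - 1)!},
\end{equation}
the leading $(M-1)!$ being the reciprocal volume of $\Delta^{M-1}$ that normalizes the uniform density. Since $\card{\bm d_A} = \card{\bm d_B} = D$, the exponent $\bm c = \bm d_A + \bm d_B$ satisfies $\card{\bm c} = 2D$.

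The remaining work is purely combinatorial bookkeeping: writing $\prod_m c_m! = (2D)!\,\binom{2D}{\bm d_A + \bm d_B}^{-1}$ converts the moment into $\frac{(2D)!(M-1)!}{(2D+M-1)!}\binom{2D}{\bm d_A + \bm d_B}^{-1}$, and multiplying by $\binom{D}{\bm d_A}\binom{D}{\bm d_B}$ reproduces exactly $\ZZ_{AB}$ of \cref{eqn:ZZ-def}. Finally I would apply the outer expectation $\E_{S_N}$ and exchange it with the finite double sum by linearity, leaving $\ZZ_{AB}$ as a deterministic coefficient and $\E_{S_N}[(\bm P\bm P^\top)_{AB}]$ as the only stochastic factor, which yields \cref{eqn:risk}. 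I expect the only genuinely substantive step to be the Dirichlet moment formula together with its normalization constant; everything else is the algebra of multinomial coefficients and the trivial interchange of $\E_{S_N}$ with a finite sum over $\N_D^M$.
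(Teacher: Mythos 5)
Your proposal is correct and follows essentially the same route as the paper: expand $\bm b(\bm t)-\hat{\bm b}(\bm t\,|\,S_N)$ as a B\'ezier simplex with control points $\bm p'$, swap the finite double sum with both expectations, and reduce everything to the moment $\E_{\bm t\sim U(\Delta^{M-1})}[\bm t^{\bm d_A+\bm d_B}]$, which matches $\ZZ_{AB}$ after the multinomial bookkeeping. The only difference is cosmetic: where you quote the standard Dirichlet moment formula (with the $(M-1)!$ normalization fixed by $\E_{\bm t}[1]=1$), the paper derives the same simplex integral from scratch via a Laplace-transform computation of $\prod_i\int_0^\infty dt_i\,t_i^{q_i}\,\delta(1-\sum_i t_i)$, and the two expressions agree since $\frac{Q!}{(Q+M-1)!}\binom{Q}{\bm q}^{-1}=\frac{\prod_i q_i!}{(Q+M-1)!}$.
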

The proof is provided in the supplementary materials (\cref{sec:proof-of-main-theorem}).
Once the set of parameters of the system, including the simplex dimension $M$, degree of the B\'esier simplex $D$, the dimension of the target data $L$, the amplitude of the noise $\sigma$, is fixed, the equation \cref{eqn:risk} says that the asymptotic value of this risk function depends only on how we choose the matrix $\bm P$.
We calculate the asymptotic form of the risk \cref{eqn:risk} with $\bm P$ determined from the all-at-once (AAO) fitting and the inductive-skeleton (ISK) fitting.
We call them $\bm P_\mathrm{AAO}$ and $\bm P_\mathrm{ISK}$ respectively.

\subsection{All-at-once fitting}\label{sec:all-at-once-fitting}
\paragraph{Samples and determined control points}
Let us recall the sample $S_N$ consists of elements $(\bm t_n, \bm x_n) \in \Delta^{M-1} \times \R^L$ with $\bm x_n = \bm b(\bm t_n) + \bm \varepsilon_n$ and $\bm t_n \sim U(\Delta^{M-1})$, $\bm \varepsilon_n \sim N(0, \sigma^2 \bm I)$ for $n = 1, \dots, N$.
The matrix $\bm P_\mathrm{AAO}$ is determined by minimization of the OLS error below
\begin{align}
\frac 1 N \sum_{n=1}^N \norm{\bm x_n - \hat{\bm b}(\bm t_n)}^2
&= \frac 1 N \norm{ \bm Z \bm P + \bm Y }_{\mathrm F}^2, \label{eqn:rss}
\end{align}
where $\norm{\cdot}_\mathrm{F}$ means the Frobenius norm and
$\bm z_n = [\text{a vector with component } \binom{D}{\bm d}\bm t_n^{\bm d}~(\bm d\in \N_D^M)] \in \R^{\card{\N_D^M}}$, $\bm Z = \sqbra{\bm z_1 \bm z_2 \cdots \bm z_N}^\top \in \R^{N \times \card{\N_D^M}}$, $\bm Y = \sqbra{\bm \varepsilon_1 \bm \varepsilon_2\cdots \bm \varepsilon_N}^\top \in \R^{N \times L}$.
In this notation, the optimum takes well known form: $\bm P_\mathrm{AAO} = -\paren{\bm Z^\top \bm Z}^{-1}\bm Z^\top \bm Y$.
Note that the regularity of the matrix $\bm Z^\top \bm Z$ is guaranteed by taking a sufficiently large number of samples $S_N$, or more precisely $\set{\bm t_n}_{n = 1, \dots, N}$.

\paragraph{Calculation of the asymptotics}
To calculate the risk asymptotics \cref{eqn:risk} in the all-at-once fitting, we need to calculate asymptotic values of expectation values of the matrix $\bm P_\mathrm{AAO} \bm P_\mathrm{AAO}^\top$ over $S_N$.
The first observation is that the contribution from the noise matrix $\bm Y$ is only located at the middle of the sequence of matrix product $\bm P_\mathrm{AAO} \bm P_\mathrm{AAO}^\top = \paren{\bm Z^\top \bm Z}^{-1}\bm Z^\top \bm Y \bm Y^\top \bm Z \paren{\bm Z^\top \bm Z}^{-1}$.
If $\bm Y \bm Y^\top \propto 1_N$, the calculation reduces very simple form.
In fact, we can perform it by decomposing $\E_{S_N}$ to $\E_{\bm t_n} \E_{\bm \varepsilon_n}$.
After taking expectation value $\E_{\bm \varepsilon_n}$,
we get
\begin{align}
\E_{S_N} \sqbra{\bm P_\mathrm{AAO} \bm P_\mathrm{AAO}^\top}
&= 
\sigma^2 L \cdot
\E_{\bm t_n} \sqbra{
\paren{\bm Z^\top \bm Z}^{-1}
}
.
\label{eqn:P_AAO-P_AAO}
\end{align}
The prefactor $\sigma^2 L$ results from taking expectation over the noise.
Here, $L$ is the dimension of the space control points lived in.
Now, the only remaining task is the estimation of the asymptotic behavior of the matrix $(\bm Z^\top \bm Z)$.
A key observation is that each element of this matrix is an average over the samples $\bm t_n$:
\begin{align}
\frac{1}{N}
\paren{\bm Z^\top \bm Z}_{AB}
=
\binom{D}{\bm d_A}
\binom{D}{\bm d_B}
\sum_{n=1}^N
\frac{1}{N}
\bm t_n^{\bm d_A + \bm d_B}
.
\end{align}
In fact, it converges to the matrix $\ZZ_{AB}$ defined in \cref{eqn:ZZ-def} as $N \to \infty$.
Therefore, by using the law of large numbers, we can get
$
\paren{\bm Z^\top \bm Z}_{AB}
\overset{p}{\to}
N \ZZ_{AB}.
$
To substitute it to \cref{eqn:P_AAO-P_AAO}, we need to guarantee $\ZZ_{AB}$ has the inverse matrix, i.e.
\begin{theorem}\label{thm:sigma-is-non-singular}
For any $D, M$, the matrix $\ZZ_{AB}$ in \cref{eqn:ZZ-def} is non-singular.
\end{theorem}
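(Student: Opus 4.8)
The plan is to recognize $\ZZ$ as a Gram matrix of the Bernstein basis polynomials and to deduce its non-singularity from their linear independence, rather than attempting a direct determinant evaluation. First I would rewrite the entries of $\ZZ$ as moments of the uniform distribution on the simplex. Since $U(\Delta^{M-1})$ is the Dirichlet distribution with all parameters equal to one, the Dirichlet integral gives
\[
\E_{\bm t \sim U(\Delta^{M-1})}\sqbra{\bm t^{\bm d_A + \bm d_B}}
= \frac{(M-1)!\,\prod_{m=1}^M \paren{(\bm d_A)_m + (\bm d_B)_m}!}{(2D+M-1)!},
\]
and because $\binom{2D}{\bm d_A+\bm d_B}^{-1} = \prod_m ((\bm d_A)_m+(\bm d_B)_m)! / (2D)!$, a direct comparison with \cref{eqn:ZZ-def} yields
\[
\ZZ_{AB}
= \binom{D}{\bm d_A}\binom{D}{\bm d_B}\,
\E_{\bm t \sim U(\Delta^{M-1})}\sqbra{\bm t^{\bm d_A + \bm d_B}}
= \E_{\bm t \sim U(\Delta^{M-1})}\sqbra{\phi_A(\bm t)\,\phi_B(\bm t)},
\]
where $\phi_A(\bm t) := \binom{D}{\bm d_A}\bm t^{\bm d_A}$ is the Bernstein basis polynomial indexed by $\bm d_A \in \N_D^M$. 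Thus $\ZZ$ is exactly the Gram matrix of the family $\set{\phi_A}$ under the $L^2$ inner product induced by $U(\Delta^{M-1})$.

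Next I would invoke the standard fact that a Gram matrix is positive definite (hence non-singular) if and only if the generating functions are linearly independent. For any real vector $\bm c = (c_A)$,
\[
\bm c^\top \ZZ \bm c
= \E_{\bm t \sim U(\Delta^{M-1})}\sqbra{\paren{\textstyle\sum_A c_A \phi_A(\bm t)}^2} \ge 0,
\]
so $\ZZ$ is positive semidefinite, and $\bm c^\top \ZZ \bm c = 0$ forces $\sum_A c_A\phi_A = 0$ almost everywhere, hence identically on $\Delta^{M-1}$ (the uniform measure has full support and the $\phi_A$ are continuous). Absorbing the nonzero constants $\binom{D}{\bm d_A}$, it remains to show that the distinct monomials $\set{\bm t^{\bm d} | \bm d \in \N_D^M}$ are linearly independent as functions on $\Delta^{M-1}$.

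Finally I would establish this linear independence by a homogeneity argument. Suppose the degree-$D$ homogeneous polynomial $P(\bm t) = \sum_{\bm d \in \N_D^M} c_{\bm d}\,\bm t^{\bm d}$ vanishes on $\Delta^{M-1}$. Its relative interior is a nonempty open subset of the affine hyperplane $H = \set{\bm t | \sum_m t_m = 1}$, and a polynomial vanishing on a nonempty relatively open subset of $H$ vanishes on all of $H$. By homogeneity, every $\bm t$ with $s := \sum_m t_m \neq 0$ satisfies $\bm t/s \in H$ and $P(\bm t) = s^D P(\bm t/s) = 0$; since $\set{\bm t | \sum_m t_m \neq 0}$ is dense in $\R^M$, we conclude $P \equiv 0$ and every $c_{\bm d} = 0$. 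This gives the required linear independence and closes the argument.

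The main obstacle is the identification in the first step that $\ZZ$ is the Bernstein Gram matrix under the Dirichlet/uniform measure; once that moment computation is in place, the positive-definiteness reduces to the essentially elementary fact that degree-$D$ monomials remain linearly independent upon restriction to the simplex, which the homogeneity argument settles cleanly. (The degenerate case $M=1$, where $\N_D^M$ is a singleton and $\ZZ$ is a positive scalar, is immediate.)
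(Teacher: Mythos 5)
Your proposal is correct and takes essentially the same route as the paper: both recognize $\ZZ$, up to a positive scalar and a choice of normalization of the basis, as the Gram matrix of the degree-$D$ homogeneous monomials under the $L^2$ inner product $\int_{\Delta} P(\bm t)Q(\bm t)\,\diff\bm t$, and both deduce positive definiteness from the same homogeneity argument that a nonzero homogeneous polynomial cannot vanish identically on $\Delta^{M-1}$ (the paper scales a point of the positive orthant into $\Delta$, you scale off the hyperplane $\sum_m t_m = 1$; these are interchangeable). The only cosmetic difference is your use of the Bernstein basis $\binom{D}{\bm d}\bm t^{\bm d}$, which makes the Gram matrix equal to $\ZZ$ exactly, whereas the paper uses $\bm t^{\bm d}/\bm d!$ and obtains $\ZZ$ up to a nonzero scalar.
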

The proof is given in the supplementary materials (\cref{sec:proof-of-regularity}).
Just by replacing $(\bm Z^T \bm Z)$ to $N \ZZ$, we arrive at the asymptotic form of the risk.
In addition to it, we can further simplify the result by using: $\sum_{AB} \ZZ_{AB}
\ZZ^{-1}_{AB} = \ _{D+M-1} C_D$, which is relatively easy to show (see \cref{sec:derivation-of-sum} in the supplementary materials).
\if0
\footnote{
Let $\tilde{\ZZ}$ be a cofactor matrix and $C_{AB}$ be a $(A, B)$ minor of $\ZZ$.
\begin{equation*}
\begin{split}
\sum_{A, B} \ZZ_{AB} \ZZ_{AB}^{-1}
    &= \sum_{A,B} \left(\frac{1}{\det(\ZZ)}\tilde{\ZZ} \circ \ZZ\right)_{AB} 
    = \frac{1}{\det(\ZZ)}\sum_{A,B} \ZZ_{AB}C_{AB} \\
    &= \frac{1}{\det(\ZZ)}\sum_{A}^{} \det(\ZZ)
    =  \ _{D+M-1} C_D.
\end{split}
\end{equation*}
From 1st line to 2nd line, we use the definition of the determinant of $\ZZ$.
}
\fi

In summary, our formula for the asymptotic form of the risk for the all-at-once fitting is
\begin{align}
R_N \overset{p}{\to}
\frac{\sigma^2 L}{N}
\sum_{A, B}
\ZZ_{AB}
\ZZ^{-1}_{AB}
=
\frac{\sigma^2 L}{N}
\ _{D+M-1} C_D \quad \text{as $N \to \infty$}.
\label{eqn:risk-all-at-once}
\end{align}

\subsection{Inductive skeleton fitting}\label{sec:inductive-skeleton-fitting}
Before showing the asymptotic form of the risk for the inductive skeleton fitting, it would be better to introduce some notations here.
To treat subsimplices of the simplex $\Delta^{M-1}$, it is useful to notice that there is a one-to-one correspondence between a subsimplex and $M$-dimensional binary vector:
\begin{align}
\text{a subsimplex of $\Delta^{M-1}$}
\quad
\Longleftrightarrow
\quad
\bm I = [I_1, I_2, \dots, I_M], \quad
I_i \in \Set{0, 1}, \quad
\bm I \neq \bm 0.
\end{align}
In this notation, $\Delta^{M-1}$ itself is identified to $[1,1, \dots, 1]$.
The sum $\card{\bm I} = \sum_{i=1,2, \dots, M} I_i$ provides the dimension + 1 of the corresponding subsimplex.
We call a subsimplex indexed by $\bm I$ as $\Delta^{\bm I}$ from now on.
In addition, it is useful to define notation for the set of all $(m-1)$-dimensional subsimplices:
\begin{align}
(m) = \text{all $(m)$-dimensional subsimplices of $\Delta^{M-1}$}
=
\cup_{\card{\bm I} = m+1} \Delta^{\bm I}
,
\end{align}
and we call corresponding control point submatrices as $\bm P^{(m)}$.

\paragraph{Samples and determined control points}
In this notation, we can state that the inductive skeleton fitting is an inductive procedure of determining control points matrices $\bm P^{(m)}$ from low $m=0, 1, \dots, M-1$.
Suppose all $\set{\bm P^{(k)}}_{k < m}$ are already fixed and the samples on $\cup_{\card{\bm I} = m} \Delta^{\bm I}$,
$S_{N^{(m)}} = \set{(\bm t_1^{(m)}, \bm x_1^{(m)}), \dots, (\bm t_{N^{(m)}}^{(m)}, \bm x_{N^{(m)}}^{(m)})}$ are provided from
${\bm t}_n^{(m)} \sim U(\cup_{\card{\bm I} = m+1} \Delta^{\bm I})$.
The $m$-th submatrix $\bm P^{(m)}$ is determined by minimizing the OLS error
\begin{align}
&
\frac{1}{N^{(m)}}
\sum_{n = 1}^{N^{(m)}}
\norm{
\bm x_n^{(m)}
-
\hat{\bm b}(\bm t_n^{(m)})
}^2\label{eqn:OLS}
\end{align}
Note that there is no need to take any control point on $\Delta^{\bm J}$ with $\card{\bm J} > m$ into account because each $\bm t_n^{(m)}$ is on $\Delta^{\bm I}$ with $\card{\bm I} = m+1$ and there is no contribution to $\hat{\bm b}(\bm t_n^{(m)})$ from such higher dimensional control point.
In addition, we regard lower dimensional control points already fixed, so the net objective control points are ones included in $\bm P^{(m)}$.
By repeating similar procedure done in the all-at-once fitting, we can conclude $\bm P^{(m)}$ is determined as
\begin{align}
{\bm P}_\mathrm{OLS}^{(m)}
= 
-
[({\bm Z}^{(m)})^\top {\bm Z}^{(m)}]^{-1}
({\bm Z}^{(m)})^\top
\paren{
{\bm Y}^{(m)}
+
\sum_{k < m}
{\bm Z}^{(m) [k]}
{\bm P}_\mathrm{OLS}^{(k)}
}
\label{eqn:P_OLS}
\end{align}
where
$
{\bm z}_{n}^{(m)[k]}
=
\sqbra{\text{a vector with component }
({\bm z}_{n}^{(m)})^{{\bm d}^{(k)}}
}$,
$
{\bm Z}^{(m) [k]}
=
\sqbra{{\bm z}_1^{(m) [k]}
 \bm{z}_2^{(m) [k]}
 \cdots
 {\bm z}_{N^{(m)}}^{(m) [k]}}^\top$,
$
{\bm Y}^{(m)}
=
\sqbra{{\bm \varepsilon}_1^{(m)} \bm{\varepsilon}_2^{(m)} \cdots {\bm \varepsilon}_{N^{(m)}}^{(m)}}^\top
.
$

\paragraph{Calculation of the asymptotics}
We get ${\bm P_\mathrm{ISK}} {\bm P_\mathrm{ISK}}^\top
=
\oplus_{i, j=0}^{M-1}
{\bm P}^{(i)}_\mathrm{OLS}
({\bm P}^{(j)}_\mathrm{OLS})^\top
$ which we need to compute the risk \cref{eqn:risk}.
%
As one might notice, the risk for the inductive-skeleton fitting depends on each number of $(m)$-dimensional subsamples $N^{(m)}$.
We will determine the best combination of $N^{(m)}$ constrained on $\sum_m N^{(m)} = N$ later.
Here, we treat the risk depending not $N$ but every $N^{(m)}$ and call it as
$
R_{N^{(0)}, N^{(1)}, \dots, N^{(M-1)}}
$.
To calculate $\E_{S_N} [\bm P_{ISK} \bm P_{ISK}^\top]$, we again take expectation over noise.
Thanks to $\E [\bm Y^{(m)} (\bm Y^{(n)})^\top ] = \sigma^2 L \bm 1_{N^{(m)}}$ or $\bm 0$ depending on $m=n$ or not, and the central limit with respect to $\bm z_n$, one can get
\begin{align}
&\E_{ S_N }
\sqbra{
{\bm P}^{(i)}_\mathrm{OLS}
({\bm P}^{(j)}_\mathrm{OLS})^\top
}
\notag \\
&\overset{p}{\to}
\sigma^2 L
\sum_{
\substack{
m \leq i
\\
m \leq j
}}
\sum_{
\substack{
m \leq k_1 < \dots < k_{\heartsuit} < i
\\
m \leq l_1 < \dots < l_{\spadesuit} < j
}
}
\frac{(-1)^{\heartsuit + \spadesuit}}{N^{(m)}}
{\bm \Lambda}_{(i)}
{\bm \Lambda}^{(i)[k_{\heartsuit}]}
{\bm \Lambda}_{(k_{\heartsuit} )}
\cdots
{\bm \Lambda}^{(k_1)[m]}
{\bm \Lambda}_{(m)}
{\bm \Lambda}^{[m](l_1)}
\cdots
{\bm \Lambda}_{(l_{\spadesuit} )}
{\bm \Lambda}^{[l_{\spadesuit}](j)}
{\bm \Lambda}_{(j)}
\label{eqn:P_OLS-P_OLS}
\end{align}
after substituting the recursive formula \cref{eqn:P_OLS} repeatedly, where
\begin{align}
&({\bm \Lambda}^{(m)[k]})_{ \bm d ^{(m)} \bm d ^{(k)} }
= 
\frac{(m-1)!}{\ _{M} C_m}
\begin{pmatrix}
D \\
{\bm d}^{(m)}
\end{pmatrix}
\begin{pmatrix}
D \\
{\bm d}^{(k)}
\end{pmatrix}
\sum_{ \card{I} = m }
\frac{
\delta_{{\bm I}, (\bm d ^{(m)} +  \bm d ^{(k)} )_{01} }
\prod_{I_i = 1}( \bm d ^{(m)} +  \bm d ^{(k)}  )_i !}{
[\sum_{I_i = 1} ( \bm d ^{(m)} +  \bm d ^{(k)}  )_i
+ m - 1]!
},
\notag
\\
&
{\bm \Lambda}^{[k](m)}
=
{\bm \Lambda}^{(m)[k]},
\quad
{\bm \Lambda}_{(m)}
=
({\bm \Lambda}^{(m)[m]})^{-1}
\label{eqn:Lamb}
\end{align}
and
\begin{align}
\delta_{{\bm I}, (\bm d ^{(m)} +  \bm d ^{(k)} )_{01} }
&=
\left\{ \begin{array}{ll}
1 & ( {\bm I} = (\bm d ^{(m)} +  \bm d ^{(k)} )_{01}  ) \\
0 & \text{otherwise}\\
\end{array} \right.
\end{align}
For the complete derivation, see \cref{sec:risk-derivation} in the supplementary materials.
We stop here and leave to get a closed formula for the asymptotics of the risk on inductive-skeleton fitting as future work.
Instead, we calculate the asymptotic risk numerically by using \cref{eqn:P_OLS-P_OLS,eqn:risk} and obtain \cref{tab:risk-inductive}.
\begin{table}[H]
\centering
\caption{Numerically computed asymptotic risks of the inductive skeleton fitting ($M$: dimension of B\'ezier simplex, $D$: degree of B\'ezier simplex, $N^{(m)}$: sample size of $(m)$-skeleton).}\label{tab:risk-inductive}
\begin{tabular}{l|ll} \toprule
$R_{N^{(0)}, N^{(1)}, \dots}$ &
$D = 2$ &
$D = 3$
\\ \midrule
$M = 2$ &
$1.0 / N^{(1)} + 0.5 / N^{(0)}$ &
$2.0 / N^{(1)} + 0.2666 / N^{(0)}$
\\
$M = 3$ &
$3.0 / N^{(1)} + 0.375 / N^{(0)}$ &
$1.0 / N^{(2)} + 3.535 / N^{(1)} + 0.1464 / N^{(0)}$
\\
$M = 4$ &
$5.142 / N^{(1)} + 0.4571 / N^{(0)}$ &
$5.333 / N^{(2)} + 4.714 / N^{(1)} + 0.1650 / N^{(0)}$
\\
$M = 5$ &
$7.142 / N^{(1)} + 0.625 / N^{(0)}$ &
$13.33 / N^{(2)} + 6.666 / N^{(1)} + 0.2083 / N^{(0)}$
\\
$M = 6$ &
$8.928 / N^{(1)} + 0.8214 / N^{(0)}$ &
$24.24 / N^{(2)} + 9.740 / N^{(1)} + 0.2575 / N^{(0)}$
\\
$M = 7$ &
$10.5 / N^{(1)} + 1.020 / N^{(0)}$ &
$37.12 / N^{(2)} + 13.84 / N^{(1)} + 0.3119 / N^{(0)}$
\\
$M = 8$ &
$11.87 / N^{(1)} + 1.212 / N^{(0)}$ &
$51.17 / N^{(2)} + 18.73 / N^{(1)} + 0.3723 / N^{(0)}$
\\ \bottomrule
\end{tabular}
\end{table}

\subsection{All-at-once vs Inductive skeleton}\label{sec:all-vs-inductive}
\Cref{tab:risk-inductive} tells the risk of the inductive skeleton fitting depends on subsample sizes $N^{(m)}$.
Given total sample size $N$, we can minimize the risk by finding the optimally-decoupled subsample sizes:
\begin{align}
    R_N := \min_{N^{(0)}, \dots, N^{(M-1)}} \Set{R_{N^{(0)}, \dots, N^{(M-1)}}} \text{ subject to } \sum_{m=0}^{M-1} N^{(m)} = N.
\end{align}
We calculated optimal risks for all cases shown in \cref{tab:risk-inductive} and compared them to the risks of the all-at-once fitting.
\Cref{tab:risk-comparison} shows the results.
\begin{table}[H]
\centering
\caption{Comparison of asymptotic risks of the all-at-once $R_N^\mathrm{AAO}$ vs the inductive skeleton with the optimal subsample ratio $R_N^\mathrm{ISK}$ ($M$: dimension of B\'ezier simplex, $D$: degree of B\'ezier simplex, $N$: sample size). The winner is shown in bold.}\label{tab:risk-comparison}
\begin{tabular}{l|ll|ll} \toprule
& \multicolumn{2}{c|}{$D = 2$} & \multicolumn{2}{c}{$D = 3$} \\
& $R_{N}^\mathrm{AAO}$ & $R_N^\mathrm{ISK}$
& $R_{N}^\mathrm{AAO}$ & $R_N^\mathrm{ISK}$
\\ \midrule
$M = 2$ &
$3.0 / N$ &
$\bm{2.91421} / N$ &
$4.0 / N$ &
$\bm{3.72726} / N$
\\
$M = 3$ &
$6.0 / N$ &
$\bm{5.49632} / N$ &
$\bm{10.0} / N$ &
$10.6472 / N$
\\
$M = 4$ &
$10.0 / N$ &
$\bm{8.66660} / N$ &
$\bm{20.0} / N$ &
$23.8821 / N$
\\
$M = 5$ &
$15.0 / N$ &
$\bm{11.9936} / N$ &
$\bm{35.0} / N$ &
$44.7548 / N$
\\
$M = 6$ &
$21.0 / N$ &
$\bm{15.1663} / N$ &
$\bm{56.0} / N$ &
$73.1387 / N$
\\
$M = 7$ &
$28.0 / N$ &
$\bm{18.0687} / N$ &
$\bm{84.0} / N$ &
$107.570 / N$
\\
$M = 8$ &
$36.0 / N$ &
$\bm{20.6799} / N$ &
$\bm{120.0} / N$ &
$146.206 / N$
\\ \bottomrule
\end{tabular}
\end{table}

As one can see, the optimum inductive skeleton fitting outperforms the all-at-once fitting in $D = 2$, but it is not always correct in $D = 3$.
On $D = 2$, in fact, we can show that the minimum value of the inductive skeleton always less than the asymptotic risk of the corresponding all-at-one fitting.



\section{Numerical examples}\label{sec:numerical-examples}
We examine the empirical performances of the all-at-once fitting and the inductive skeleton fitting and verify the asymptotic risks derived in \cref{sec:all-at-once-fitting,sec:inductive-skeleton-fitting} over synthetic instances and multi-objective optimization instances.
Experiment programs were implemented in Python 3.7.1 and run on a Windows 7 PC with an Intel Core i7-4790CPU (3.60 GHz) and 16 GB RAM.
All experiments are reproducible by the source code and dependent libraries provided in the supplementary materials.

\subsection{Synthetic instances}\label{sec:synthetic-instances}
To verify the asymptotic risks derived in \cref{sec:all-at-once-fitting,sec:inductive-skeleton-fitting}, we consider the fitting problem where the true B\'ezier simplex $\bm b(\bm{t})~(\bm t \in \Delta^{M-1})$ is an $(M-1)$-dimensional unit simplex on $\R^L$, and randomly generate $N$ training points $\set{(\bm t_n, \bm x_n)}_{n = 1}^N$ as $\bm x_n = \bm b(\bm t_n) + \bm{\varepsilon}_n~(\bm \varepsilon_n \sim N(\bm 0, 0.1^2 \bm I))$.
This synthetic instance is parameterized by a tuple $(L, M, N)$.
The detailed data generation processes are shown in the supplementary materials (\cref{sec:numerical-experiments}).

In this experiment, we estimated the B\'ezier simplex with degree $D = 2$ and 3, and compared the following three fitting methods:
\begin{description}
    \item[all-at-once] the all-at-once fitting (\cref{sec:all-at-once-fitting});
    \item[inductive skeleton (non-optimal)] the inductive skeleton fitting (\cref{sec:inductive-skeleton-fitting}) with $N^{(0)} = \dots = N^{(M - 1)} = N / M$, which does not provide the optimal value of the risk \cref{tab:risk-inductive};
    \item[inductive skeleton (optimal)] the inductive skeleton fitting (\cref{sec:inductive-skeleton-fitting}) where $N^{(0)}, \dots, N^{(M - 1)}$ are determined by minimizing the risk \cref{tab:risk-inductive} under the constraints $\sum_{m = 0}^{M-1} N^{(m)} = N$ and $N^{(m)}\geq 0~(m = 0, \dots, M-1)$. The actual sample size $N^{(m)}$ for each $(D, M)$ are shown in \cref{sec:numerical-experiments} (\cref{tab:optimal-subsample-ratio}).
\end{description}
When we calculated an approximation of the expected risk for each method, we randomly chose other 10000 parameters $\set{\bm{\hat t}_n}_{n = 1}^{10000}$ from $U(\Delta^{M - 1})$ as a test set and measured the mean squared error, $\mathrm{MSE} := \frac{1}{10000} \sum_{n = 1}^{10000} \norm{\bm b(\bm{\hat{t}}_n) - \bm{\hat{b}}(\bm{\hat{t}}_n)}^2$, where $\bm{\hat{b}}$ is the estimated B\'ezier simplex.
We ran 20 trials and measured MSEs for each $(L, M, N)$ with $D \in \set{2, 3}$.

Owing to space limitations, we only present typical results here. The remaining results are provided in the supplementary materials (\cref{sec:numerical-experiments}).
\Cref{fig:MSE-vs-N} shows box plots of MSEs over 20 trials and our theoretical risks \cref{eqn:risk} and \cref{tab:risk-inductive} for each $N \in \set{250, 500, 1000, 2000}$ with $(L, M) = (100, 8)$ and $D \in \set{2, 3}$.
We observe that these figures empirically show that our theoretical risks are correct for both $D = 2$ and 3, and the gap between the actual MSEs and the risks are sufficiently small at $N = 1000$.
For both $D = 2$ and 3, the inductive skeleton (optimal) always achieved lower MSEs than that of the inductive skeleton (non-optimal).
This result suggests the efficiency of minimizing the risk (\cref{tab:risk-comparison}) with respect to the sample size of each dimension.
In addition, the inductive skeleton fitting (optimal) also outperformed the all-at-once fitting in the case of $D = 2$.
This result also supports the discussion described in \cref{sec:all-vs-inductive}.
\begin{figure}[h]
 \begin{minipage}{0.49\hsize}
        \centering
    \includegraphics[width=1\textwidth]{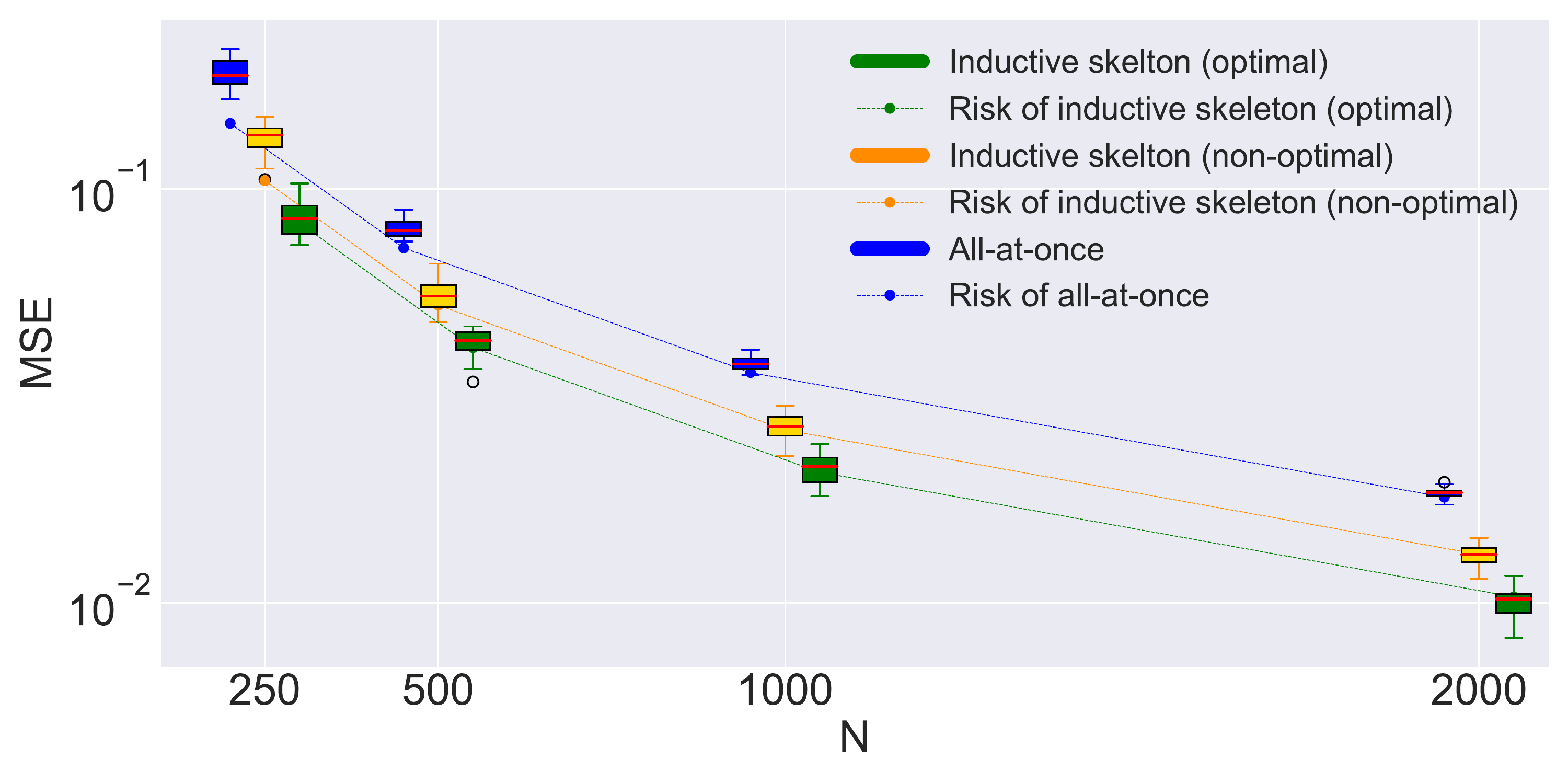}
    \subcaption{D=2}
    \label{fig:MSE-vs-N-D=2}
 \end{minipage}
 \begin{minipage}{0.49\hsize}
        \centering
    \includegraphics[width=1\textwidth]{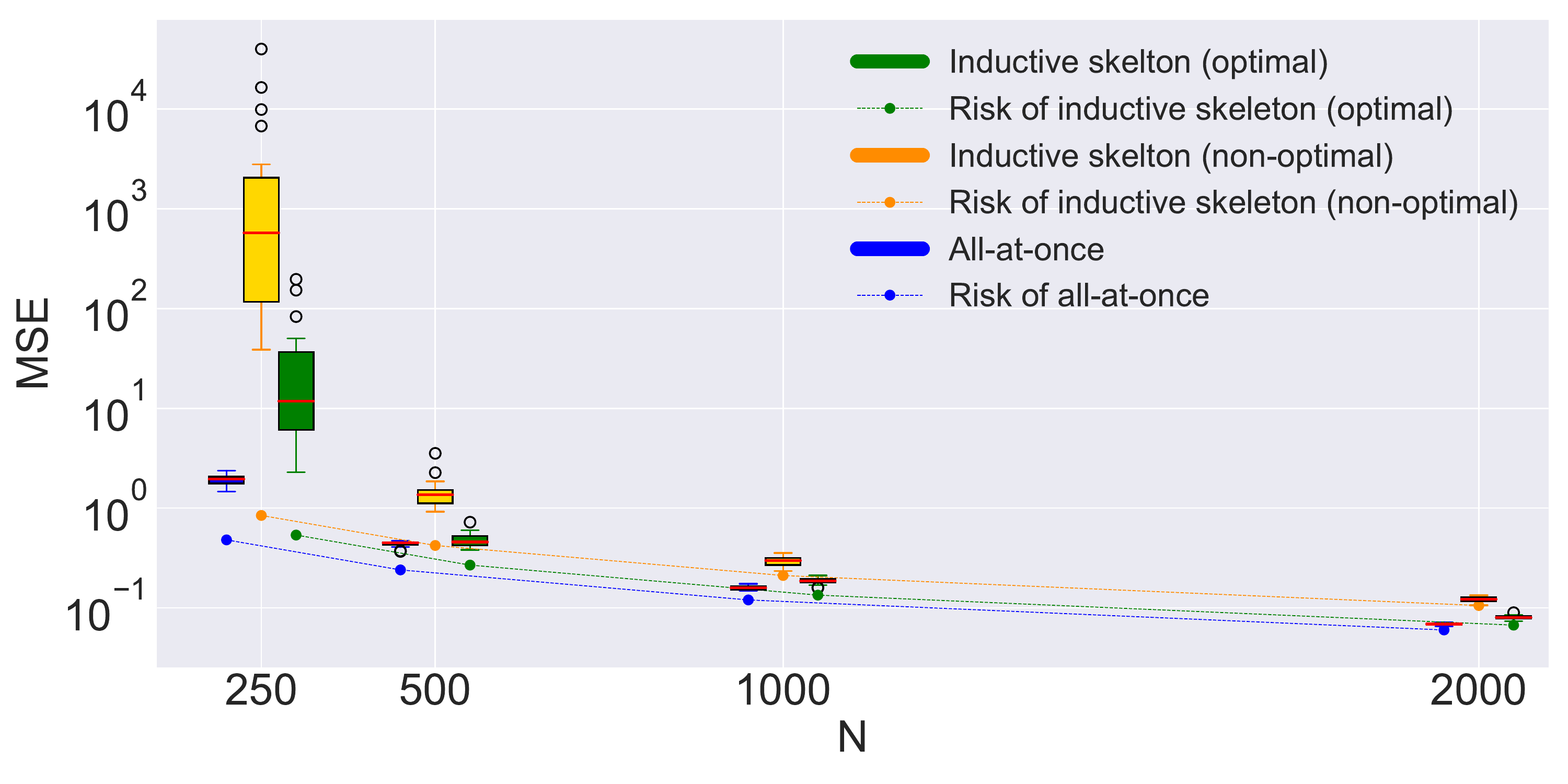}
    \subcaption{D=3}
    \label{fig:MSE-vs-N-D=3}
 \end{minipage}
 \caption{Sample size $N$ vs. MSE with $(L, M) = (100, 8)$ (boxplots over 20 trials and theoretical risks).}
 \label{fig:MSE-vs-N}
\end{figure}

\subsection{Multi-objective optimization instances}\label{sec:MOP-instances}
To investigate our results practically, we provide two complementary multi-objective optimization problem instances: a generalized location problem called \texttt{MED} \cite{Harada2006,Hamada2010} and a multi-objective hyper-parameter tuning of the group lasso \cite{Yuan2006} on the \texttt{Birthwt} dataset \cite{Hosmer1989,Venables2002}.
Both of them are strongly convex three-objective optimization problems and we consider fitting their two-dimensional (that is $M = L = 3$) Pareto fronts by a B\'ezier simplex with degree $D = 2$ and 3. 
For the location problem, its Pareto front can be represented by a B\'ezier simplex with degree $D = 2$.
For the group lasso, on the other hand, the Pareto front cannot be represented with degree $D = 2$ but $D = 3$ (see \cref{sec:Pareto-fronts}).
The detailed description of each problem is shown in \cref{sec:numerical-experiments}.

As we conducted in the previous experiments, we generated a training set and a test set on a Pareto front randomly then fitted a B\'ezier simplex to the training set and evaluated the MSE between the estimated B\'ezier simplex and the test set.
We chose the number of training points to $N = 50$ and 100.
With regard to the test set, the number of sample points is 10000 and 1000 for the location problem and the group lasso respectively.
We repeated experiments 20 times for each $(D, N)$.

For each problem instance and method, the average and the standard deviation of the MSE are shown in \cref{tab:MSE-MOP-instances}.
In \cref{tab:MSE-MOP-instances}, we highlighted the best score of MSE out of all-at-once fitting and inductive skeleton fitting (optimal) and added the results of one-sided Student's t-test with significance level 0.05.

\begin{table}[h]
\caption{MSE (avg.\ $\pm$ s.d.\ over 20 trials) for the location problem and the group lasso. The winners with significance level $p < 0.05$ are shown in bold.}
\label{tab:MSE-MOP-instances}
\begin{minipage}[t]{.5\textwidth}
    \centering
    \subcaption{Location problem}
    \scriptsize
    {\tabcolsep = 1.5mm
    \begin{tabular}{ccll}
    \toprule
$D$&$N$ & \multicolumn{1}{c}{All-at-once} & \multicolumn{1}{c}{Inductive-skeleton (optimal)}\\ \midrule
2&50	&2.855e-04 $\pm$	2.114e-05	&\textbf{2.691e-04 $\pm$	8.541e-06}\\
&100 &2.660e-04 $\pm$	1.227e-05	&\textbf{2.608e-04 $\pm$	5.946e-06}\\ \midrule
3&50	&3.596e-04 $\pm$	7.935e-05	&3.269e-04 $\pm$	3.969e-05\\
&100	&2.810e-04 $\pm$	1.569e-05	&2.796e-04 $\pm$	1.478e-05\\
\bottomrule
    \end{tabular}
    }
    \label{tab:location-problem}
\end{minipage}
\begin{minipage}[t]{.5\textwidth}
    \centering
    \subcaption{Group lasso}
    \scriptsize
    {\tabcolsep = 1.5mm
    \begin{tabular}{ccll}
    \toprule
$D$&$N$ & \multicolumn{1}{c}{All-at-once} & \multicolumn{1}{c}{Inductive-skeleton (optimal)}\\ \midrule
2&50	&\textbf{1.041e-04 $\pm$	 1.614e-05}	&4.966e-04 $\pm$	1.848e-05\\
& 100 &\textbf{8.949e-05 $\pm$	6.083e-06}	&5.020e-04 $\pm$	1.276e-05\\ \midrule
3&50	&\textbf{4.354e-05 $\pm$	 1.526e-05}	&1.206e-04 $\pm$ 	9.440e-06\\
& 100 &\textbf{3.231e-05 $\pm$	8.058e-06}	&1.141e-04 $\pm$	8.200e-06\\
 \bottomrule
    \end{tabular}
    }
    \label{tab:group-lasso}
\end{minipage}
\end{table}

Since the Pareto front of the location problem can be represented by a B\'ezier simplex of $D = 2$ and 3, we expected that the experimental results agree with our analysis discussed in \cref{sec:all-vs-inductive}.
In fact, \cref{tab:location-problem} shows that the inductive skeleton (optimal) outperformed for $D = 2$, which is consistent with our analysis.
For $D = 3$, the difference of MSEs is not significant.
\Cref{tab:risk-comparison} suggests that the difference of the risks between the two methods is very small for $(D, M) = (3, 3)$, and thus we did not observe significant differences of MSEs for $N = 50$ and 100.


In case of the group lasso, on the other hand, \cref{tab:group-lasso} shows that the all-at-once was better for both $D = 2$ and 3, and the differences are all significant.
While our analysis assumes that the target hypersurface to be fitted can be represented by a B\'ezier simplex, the Pareto front of the group lasso cannot for $D = 2$ but for $D = 3$.
Therefore, the results for $D = 2$ does not contradict to our analysis.
Moreover, the results for $D = 3$ that the all-at-once achieved better MSEs accords with our analysis.


From the above results, the validity of the analytic results is confirmed in practical situations.

\section{Conclusion}\label{sec:conclusion}
In this paper, we have shown that the asymptotic $\ell_2$-risk of the two B\'ezier simplex fitting methods developed previously: the all-at-once fitting and the inductive skeleton fitting.
From our risk analysis, the optimal ratio of subsamples for the inductive skeleton fitting has been derived, which is useful for design of experiments to maximize the goodness of fit.
We have discussed that superiority between the two fitting methods depends on the degree of a B\'ezier simplex to be fit: the inductive skeleton fitting with optimally-decoupled subsamples outperforms for degree two whereas the all-at-once fitting becomes the better for degree three, independent of the dimensionality of the B\'ezier simplex and its ambient space.
The above theoretical results have been confirmed via numerical experiments under small to moderate sample sizes.
We have demonstrated two applications of the analytic results in multi-objective optimization: a generalized location problem and a hyper-parameter tuning of the group lasso.

As a remark for future work, we point out two important cases which the current theory does not cover.
The first one is the case discussed in \cref{sec:MOP-instances} that the true surface is not representable by a model.
The second one is presented in the literature \cite{Kobayashi2019}.
When the parameters of a B\'ezier simplex are not given in a sample and to be estimated as well as the control points, the inductive skeleton fitting outperforms the all-at-once fitting even if the B\'ezier simplex is of degree \emph{three}.
We believe that those cases would offer insightful examples to extend the scope of the theory.

\bibliographystyle{plain}

\clearpage
\appendix
\section{Proof of \texorpdfstring{\cref{thm:risk}}{theorem 2}}\label{sec:proof-of-main-theorem}
As we commented in the main body of the paper, we can regard $\bm b (\bm t) - \hat{\bm b} (\bm t | S_N)$ as a new B\'ezier simplex, and we call its control points as $\bm p'_{\bm d}$.
By using it, the risk $R_N = \E_{S_N} \sqbra{\E_{\bm t} \sqbra{\norm{\bm b (\bm t) - \hat{\bm b} (\bm t | S_N)}^2}}$ can be rewritten as
\begin{align}
R_N
&=
\E_{S_N}
\sqbra{
\E_{\bm t}
\sqbra{
\norm{
\sum_{\bm d \in \N_D^M}
\binom{D}{\bm d}
\bm t^{\bm d} \bm p_{\bm d}'
}^2
}
}
\notag \\
&=
\E_{S_N}
\sqbra{
\E_{\bm t}
\sqbra{
\sum_{\bm d_A, \bm d_B \in \N_D^M}
\binom{D}{\bm d_A}
\binom{D}{\bm d_B}
\bm t^{\bm d_A + \bm d_A }
\paren{
\bm p_{\bm d_A}'
\cdot
\bm p_{\bm d_B}'
}
}
}
\notag \\
&= 
\sum_{\bm d_A, \bm d_B \in \N_D^M}
\binom{D}{\bm d_A}
\binom{D}{\bm d_B}
\E_{\bm t}
\sqbra{
\bm t^{\bm d_A + \bm d_A }
}
\E_{S_N}
\sqbra{
\paren{
\bm p_{\bm d_A}'
\cdot
\bm p_{\bm d_B}'
}
}
.
\label{eqn:RN_in_supp}
\end{align}
The inner product in the expectation over $S_N$ is equivalent to $\paren{\bm P \bm P^\top}_{\bm d_A \bm d_B}$ where $\bm P$ is the control point matrix defined in main body.
So we need to massage the expectation over $\bm t \sim U(\Delta^{M-1})$.
To do it, the following proposition is useful.
\begin{prop}\label{thm:integ_formula}
Suppose $\sum_{i=1}^M q_i = Q \in \N$ and $r \in \R$, then the following integral formula is satisfied.
\begin{align}
I_{\bm q} (r)
&:=
\paren{
\prod_{i=1}^M
\int_0^\infty
 dt_i t_i^{q_i}
}
\delta \paren{
r - \sum_{i=1}^M t_i
}
= 
r^{Q+M-1} \frac{Q!}{(Q+M-1)! } 
\binom{Q}{\bm q}^{-1}
,
\end{align}
where $\delta (r)$ is the Dirac's delta.
\end{prop}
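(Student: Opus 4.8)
The plan is to reduce the computation to a product of one-dimensional Gamma integrals by passing to the Laplace transform in the variable $r$. First I would note that $I_{\bm q}(r)$ vanishes for $r<0$, since $t_i\ge 0$ forces $\sum_i t_i = r \ge 0$ on the support of the delta; hence it suffices to treat $r>0$ and to compute the Laplace transform $\tilde I_{\bm q}(s) = \int_0^\infty e^{-sr} I_{\bm q}(r)\,dr$ for $\operatorname{Re} s>0$. A useful preliminary observation is the scaling relation $I_{\bm q}(r)=r^{Q+M-1}I_{\bm q}(1)$, obtained from the substitution $t_i = r u_i$, which already isolates the power of $r$ and leaves only the constant $I_{\bm q}(1)$ to be pinned down; the Laplace route below produces both at once.

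The key step is to interchange the $r$-integration with the $M$-fold $t$-integration. Carrying the delta through the $r$-integral gives $\int_0^\infty e^{-sr}\delta(r-\sum_i t_i)\,dr = e^{-s\sum_i t_i}$, which factorizes the remaining integral across the coordinates:
\begin{align}
\tilde I_{\bm q}(s)
= \prod_{i=1}^M \int_0^\infty t_i^{q_i} e^{-s t_i}\,dt_i
= \prod_{i=1}^M \frac{q_i!}{s^{q_i+1}}
= \frac{\prod_{i=1}^M q_i!}{s^{\,Q+M}},
\end{align}
where I use the elementary identity $\int_0^\infty t^q e^{-st}\,dt = q!/s^{q+1}$ for $q\in\N$ together with $\sum_i q_i = Q$. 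Inverting via $\mathcal L^{-1}[s^{-(Q+M)}](r) = r^{Q+M-1}/(Q+M-1)!$ then yields $I_{\bm q}(r) = \bigl(\prod_i q_i!\bigr)\, r^{Q+M-1}/(Q+M-1)!$, and rewriting $\prod_i q_i! = Q!\,\binom{Q}{\bm q}^{-1}$ by the definition of the multinomial coefficient gives exactly the claimed formula.

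The main obstacle is justifying the formal manipulation of the Dirac delta and the interchange of integrals rigorously; since all integrands are nonnegative, Tonelli's theorem legitimizes the interchange once the delta is read as a limit of approximate identities (equivalently, once $I_{\bm q}$ is understood as a density against test functions in $r$). If one prefers to avoid distributions altogether, the same result follows by induction on $M$: use the delta to eliminate $t_M = r - \sum_{i<M} t_i$, reducing to an integral over a lower-dimensional simplex, and evaluate the emerging one-variable integral with the beta identity $\int_0^1 u^a(1-u)^b\,du = a!\,b!/(a+b+1)!$. The fiddly part of that route is the bookkeeping of the running partial sums and matching the accumulated factorials to the multinomial coefficient, but the scaling relation above keeps the induction focused solely on the constant $I_{\bm q}(1)$.
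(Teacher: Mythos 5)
Your proposal is correct and follows essentially the same route as the paper's own proof: both compute the Laplace transform in $r$, factorize it into one-dimensional Gamma integrals giving $Q!\,\binom{Q}{\bm q}^{-1} s^{-(Q+M)}$, and invert to recover $r^{Q+M-1}Q!/[(Q+M-1)!]\,\binom{Q}{\bm q}^{-1}$. Your added remarks on Tonelli, the scaling substitution $t_i = r u_i$, and the alternative induction via the beta identity are sound but go beyond what the paper records.
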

\begin{proof}
One of the easiest proofs is done by making the use of the Laplace transform $\mathcal{L}$ and its inverse $\mathcal{L}^{-1}$, i.e. the nature of $I_{\bm q}(r) = \mathcal{L}^{-1} \sqbra{\mathcal{L} \sqbra{I_{\bm q}}} (r)$.
First, the Laplace transform of $I_{\bm q}$ is
\begin{align}
\mathcal{L} [ I_{\bm q} ] (z)
=
\int_0^\infty dr \ e^{-zr} I_{\bm q} (r)
&=
\Big(
\prod_{i=1}^M
\int_0^\infty
 dt_i t_i^{q_i}
 e^{-zt_i}
\Big)
\notag \\
&=
\frac{Q!}{z^{Q+M}}
\binom{Q}{\bm q}^{-1}.
\end{align}
The inverse Laplace transform of a function $f(z)$ is defined by picking up the residue, the coefficient of $1/z$, of $e^{-zr} f(z)$.
One can calculate the residue of $e^{-zr} \mathcal{L} [ I_{\bm q} ] (z)$ by expanding $e^{-zr}$ with respect to $z$.
It provides $z^{Q+M-1}r^{Q+M-1}/{(Q+M-1)!}$, so we get
\[
\mathcal{L}^{-1} [\mathcal{L} [ I_{\bm q} ]] (r)
=
r^{Q+M-1} \frac{Q!}{(Q+M-1)! } 
\binom{Q}{\bm q}^{-1}.
\]
\end{proof}
Now, let us get back to the proof of the theorem.
The expectation value $\E_{\bm t}[ f(\bm t) ]$ can be represented by the following integral
\[
\E_{\bm t} [f(\bm t)]
=
(M-1)!
\Big(
\prod_{i=1}^M
\int_0^\infty
 dt_i
\Big)
\delta \Big(
1 - \sum_{i=1}^M t_i
\Big)
f(\bm t)
\]
The multiplication of $(M-1)!$ is necessary because we need $\E_{\bm t}[1] = 1$.
This is easily checked by using the above proposition by taking $q_i =0$ for all $i$ and $r=1$.
We can also calculate the value $\E_{\bm t}[ \bm t^{\bm d_A + \bm d_B} ]$ by using the proposition with $r=1$ as
\[
\E_{\bm t}[ \bm t^{\bm d_A + \bm d_B} ]
=
\frac{(2D)!(M-1)!}{(2D + M -1)!}
\binom{2D}{\bm d_A + \bm d_B}^{-1}.
\]
By substituting it to \cref{eqn:RN_in_supp}, we get what we want
\begin{align}
R_N
=
\sum_{\bm d_A, \bm d_B \in \N_D^M}
\frac{(2D)!(M-1)!}{(2D + M -1)!}
\binom{D}{\bm d_A}
\binom{D}{\bm d_B}
\binom{2D}{\bm d_A + \bm d_B}^{-1}
\E_{S_N}
\Bigg[
(
\bm P \bm P^\top
)_{\bm d_A \bm d_B}
\Bigg].
\end{align}

\section{Derivation of \texorpdfstring{$\sum_{i, j = 1}^N (A^{-1} \circ A)_{ij} = N$}{sum of Hadamard products}}\label{sec:derivation-of-sum}
\begin{prop}
Let $A = (a_{ij})$ be a regular matrix of size $N$.
Then
\[
    \sum_{i, j = 1}^N (A^{-1} \circ A)_{ij} = N,
\]
where $A \circ B$ is Hadamard product of $A$ and $B$.
\end{prop}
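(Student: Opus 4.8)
)).

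The plan is to collapse the fully contracted Hadamard product into a single trace and then play it against the defining relation $A^{-1}A=\bm I_N$. Writing the Hadamard product entrywise, $\sum_{i,j=1}^N(A^{-1}\circ A)_{ij}=\sum_{i,j=1}^N (A^{-1})_{ij}A_{ij}$. The key move is to read this double sum as a trace: using $A_{ij}=(A^{\top})_{ji}$ I would rewrite it as $\sum_{i,j}(A^{-1})_{ij}(A^{\top})_{ji}=\sum_i(A^{-1}A^{\top})_{ii}=\operatorname{tr}(A^{-1}A^{\top})$. This turns the claim into the evaluation of one trace instead of a sum of $N^2$ scalar products.

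The second step compares this against the trace that is forced to equal $N$. For every regular $A$ one has $\operatorname{tr}(A^{-1}A)=\operatorname{tr}(\bm I_N)=N$, so the proposition is equivalent to $\operatorname{tr}(A^{-1}A^{\top})=\operatorname{tr}(A^{-1}A)$, i.e.\ to the single scalar condition $\operatorname{tr}\!\big(A^{-1}(A^{\top}-A)\big)=0$. A transparent sufficient condition is symmetry: if $A^{\top}=A$ then $A^{\top}-A=\bm 0$ and $\sum_{i,j}(A^{-1}\circ A)_{ij}=\operatorname{tr}(A^{-1}A)=N$ at once. The same conclusion is reached through the adjugate formula $(A^{-1})_{ij}=C_{ji}/\det A$, where $C_{ji}$ denotes the $(j,i)$ cofactor of $A$, together with the row-Laplace expansion $\det A=\sum_j A_{ij}C_{ij}$: summing the expansion over $i$ gives $\sum_{i,j}A_{ij}C_{ij}=N\det A$, hence $\sum_{i,j}A_{ij}(A^{-1})_{ji}=N$ after dividing by $\det A$ — but this is again the transpose-matched sum $\operatorname{tr}(A^{-1}A)$ rather than $\sum_{i,j}A_{ij}(A^{-1})_{ij}$.

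The main obstacle is precisely that transpose. The literal expression $A^{-1}\circ A$ contracts to $\operatorname{tr}(A^{-1}A^{\top})$, which genuinely differs from $N$ once $A$ is not symmetric; for example $A=\left(\begin{smallmatrix}1&1\\0&1\end{smallmatrix}\right)$ has $A^{-1}=\left(\begin{smallmatrix}1&-1\\0&1\end{smallmatrix}\right)$ and $\sum_{i,j}(A^{-1}\circ A)_{ij}=1\neq 2=N$. Hence the identity as worded is not available for an arbitrary regular matrix: the always-valid form is the transpose-paired sum $\sum_{i,j}(A^{-1}\circ A^{\top})_{ij}=\operatorname{tr}(A^{-1}A)=N$, and the stated form requires the extra hypothesis $A=A^{\top}$ (equivalently $\operatorname{tr}(A^{-1}(A^{\top}-A))=0$) to close the gap. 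This costs nothing in the paper's application, where the identity is used for $\ZZ$ of \cref{eqn:ZZ-def}: since $\ZZ_{AB}$ depends on $\bm d_A+\bm d_B$ symmetrically, $\ZZ=\ZZ^{\top}$, the hypothesis is automatic, and one obtains $\sum_{A,B}\ZZ_{AB}\ZZ^{-1}_{AB}=\binom{D+M-1}{D}$.
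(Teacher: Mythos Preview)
Your analysis is correct, and in fact more careful than the paper's own proof. The paper argues via cofactors: it writes $A^{-1}=\frac{1}{\det A}\tilde A$ with $(\tilde A)_{ij}=C_{ij}$, then uses the Laplace expansion $\sum_j a_{ij}C_{ij}=\det A$ row by row to obtain $N$. But the inverse formula is $(A^{-1})_{ij}=\frac{1}{\det A}C_{ji}$, not $\frac{1}{\det A}C_{ij}$; the paper's computation silently swaps these indices, which is exactly the transpose issue you isolated via $\operatorname{tr}(A^{-1}A^{\top})$ versus $\operatorname{tr}(A^{-1}A)$. Your counterexample $A=\bigl(\begin{smallmatrix}1&1\\0&1\end{smallmatrix}\bigr)$ confirms the proposition is false for general regular $A$, so the paper's proof cannot be valid as written without the symmetry hypothesis.

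Your resolution---that the identity holds once $A=A^{\top}$, and that the only instance used in the paper is the symmetric matrix $\ZZ$ of \cref{eqn:ZZ-def}---is exactly the right fix. The trace reformulation and the cofactor argument are equivalent routes to the same symmetric-case identity; your route has the advantage of making the obstruction $\operatorname{tr}\bigl(A^{-1}(A^{\top}-A)\bigr)$ explicit, whereas the cofactor derivation hides it in an index swap.
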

\begin{proof}
Let $\tilde{A}$ be a cofactor matrix and $C_{ij}$ be a $(i, j)$ minor of $A$.
\begin{equation*}
\begin{split}
\sum_{i, j = 1}^N (A^{-1} \circ A)_{ij}
    &= \sum_{i, j = 1}^N \left(\frac{1}{\det(A)}\tilde{A} \circ A\right)_{ij} \\
    &= \frac{1}{\det(A)}\sum_{i, j = 1}^N (\tilde{A}\circ A)_{ij} \\
    &= \frac{1}{\det(A)}\sum_{i, j = 1}^N a_{ij}C_{ij} \\
    &= \frac{1}{\det(A)}\sum_{i = 1}^N \sum_{j = 1}^N a_{ij}C_{ij}\\
    &= \frac{1}{\det(A)}\sum_{i = 1}^N \det(A) \\
    &= N
\end{split}
\end{equation*}
\end{proof}

\section{Proof of \texorpdfstring{\cref{thm:sigma-is-non-singular}}{Theorem 3}}\label{sec:proof-of-regularity}
Put
\begin{align}
\ZZ_{d_A d_B}
:=
\frac{(2D)!(M - 1)!}{(2D + M - 1)!}
 \begin{pmatrix}
  D \\
  \bm{d}_A
 \end{pmatrix}
 \begin{pmatrix}
  D \\
  {\bm d}_B
 \end{pmatrix}
 \begin{pmatrix}
  2D \\
  {\bm d}_A + {\bm d}_B
 \end{pmatrix}^{-1}.
\end{align}
We prove $\ZZ = (\ZZ_{AB})$ is a regular matrix by considering the representation of appropriate spaces.
We start to give some review on bilinear form.
\begin{defn}
Let $V$ be a real vector space. A map $B\colon V\times V \rightarrow \R$ is said to be bininear form if for any $\bm{u}, \bm{v}, \bm{w} \in V$ and $\lambda \in \R$,
\begin{eqnarray*}
\bullet B(\bm{u}+ \bm{v}, \bm{w}) = B(\bm{u}+\bm{w}) + B(\bm{v}, \bm{w})\ \text{and}\  B(\lambda \bm{u}, \bm{v}) = \lambda B(\bm{u}, \bm{v}) \\
\bullet B(\bm{u}, \bm{v}+\bm{w})  = B(\bm{u}+\bm{v}) + B(\bm{u}, \bm{w})\ \text{and}\  B( \bm{u},\lambda \bm{v}) = \lambda B(\bm{u}, \bm{v})
\end{eqnarray*}
holds. Moreover, a bilinear form $B$ is called symmetric if $B(\bm{u}, \bm{v})=B(\bm{v}, \bm{u})$ holds for any $\bm{u}, \bm{v} \in V$

\end{defn}

\begin{thm}
Let $V$ be an $n$-dimensional real vector space and $\set{\bm{e}_1, \dots, \bm{e}_n}$ be a basis of $V$. Let  $B\colon V\times V \rightarrow \R$ be a bilinear form. Put $A = (B(\bm{e}_i, \bm{e_j}))$. Then the map

\[
\begin{array}{ccc}
B':V \times V  & \longrightarrow & \R\\
\ \ \ \ \  \rotatebox{90}{$\in$}    &                 & \rotatebox{90}{$\in$} \\
\ \ \ \ \ (\bm{u}, \bm{v})                   & \longmapsto     & \bm{x}^{\top} A\bm{y}
\end{array},
\]
is a bilinear form which is equal to $B$, where $\bm{u}=\sum_{i=1}^n x_i e_i$ and $\bm{v}=\sum_{i=1}^n y_i e_i$.
\end{thm}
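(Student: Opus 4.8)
The plan is to prove the stronger statement $B' = B$ directly; once this equality is established, the bilinearity of $B'$ comes for free, because $B$ is bilinear by hypothesis. First I would fix $\bm{u} = \sum_{i=1}^n x_i \bm{e}_i$ and $\bm{v} = \sum_{j=1}^n y_j \bm{e}_j$, observing that the coordinates $x_i, y_j$ are uniquely determined since $\set{\bm{e}_1, \dots, \bm{e}_n}$ is a basis; this is what makes the assignment $(\bm{u}, \bm{v}) \mapsto \bm{x}^\top A \bm{y}$ well defined in the first place.

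The core of the argument is a double application of the bilinearity of $B$. Expanding in the first slot and then the second gives
\[
B(\bm{u}, \bm{v}) = B\paren{\sum_{i=1}^n x_i \bm{e}_i,\ \sum_{j=1}^n y_j \bm{e}_j} = \sum_{i=1}^n \sum_{j=1}^n x_i y_j\, B(\bm{e}_i, \bm{e}_j).
\]
By the very definition $A = (B(\bm{e}_i, \bm{e}_j))$, each coefficient $B(\bm{e}_i, \bm{e}_j)$ is exactly the $(i,j)$ entry of $A$, so the double sum rearranges to $\sum_{i,j} x_i A_{ij} y_j = \bm{x}^\top A \bm{y} = B'(\bm{u}, \bm{v})$. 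Since $\bm{u}, \bm{v}$ were arbitrary, this establishes $B = B'$.

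Finally, to confirm that $B'$ is itself a bilinear form I would either simply invoke the equality just proved (a map that coincides with a bilinear form is bilinear), or verify it directly: the coordinate map $\bm{u} \mapsto \bm{x}$ is linear, and $(\bm{x}, \bm{y}) \mapsto \bm{x}^\top A \bm{y}$ is linear in each vector argument by distributivity of matrix multiplication, so the composite is bilinear. I do not expect any genuine obstacle here, as this is a standard fact; the only points requiring a little care are the well-definedness of the coordinate representation (guaranteed by the basis assumption) and keeping the order of summation straight when both arguments are expanded simultaneously.
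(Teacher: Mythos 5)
Your proof is correct and is the standard argument: expand both arguments in the basis, apply bilinearity of $B$ twice to get $B(\bm u,\bm v)=\sum_{i,j}x_i y_j\,B(\bm e_i,\bm e_j)=\bm x^\top A\bm y=B'(\bm u,\bm v)$, from which bilinearity of $B'$ is immediate. The paper states this theorem without proof, as review material on representation matrices of bilinear forms, and your argument is precisely the standard one it implicitly relies on, so there is no divergence to report.
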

We call the matrix $A$ the representation matrix of $B$ with respect to a basis  $\set{\bm{e}_1, \dots, \bm{e}_n}$ .  By the construction of $A$, we have the following proposition.
\begin{prop}
Let $V$ be an $n$-dimensional real vector space and   $B\colon V\times V \rightarrow \R$ be a bilinear form. Then the representation matrix of $B$ with respect to a basis of $V$ is a symmetric matrix if and only if $B$ is a symmetric bilinear form.
\end{prop}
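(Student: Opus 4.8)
The plan is to prove both implications directly from the definition of the representation matrix $A = (B(\bm{e}_i, \bm{e}_j))$, leaning on the representation theorem just established, which identifies $B(\bm{u}, \bm{v})$ with $\bm{x}^\top A \bm{y}$ whenever $\bm{u} = \sum_{i=1}^n x_i \bm{e}_i$ and $\bm{v} = \sum_{j=1}^n y_j \bm{e}_j$. Since the claim is an \emph{if and only if}, I would split the argument into the two directions and handle each by a one-line computation.

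For the forward direction, I would assume $B$ is symmetric and read off the entries of $A$ directly. Because $A_{ij} = B(\bm{e}_i, \bm{e}_j)$ by definition, symmetry of $B$ gives $A_{ij} = B(\bm{e}_i, \bm{e}_j) = B(\bm{e}_j, \bm{e}_i) = A_{ji}$ for all $i, j$, so $A = A^\top$. This step needs nothing beyond the defining formula for the representation matrix.

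For the converse, I would assume $A = A^\top$ and exploit the fact that $B(\bm{u}, \bm{v}) = \bm{x}^\top A \bm{y}$ is a scalar, hence equal to its own transpose: $\bm{x}^\top A \bm{y} = (\bm{x}^\top A \bm{y})^\top = \bm{y}^\top A^\top \bm{x}$. Substituting $A^\top = A$ yields $\bm{y}^\top A \bm{x}$, which the representation theorem in turn identifies with $B(\bm{v}, \bm{u})$. Therefore $B(\bm{u}, \bm{v}) = B(\bm{v}, \bm{u})$ for arbitrary $\bm{u}, \bm{v} \in V$, i.e., $B$ is a symmetric bilinear form.

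I expect no substantial obstacle here: the statement is a routine consequence of the representation theorem, and both directions reduce to a single manipulation. The only points requiring a little care are to invoke that theorem correctly in the converse so that the identity $\bm{y}^\top A \bm{x} = B(\bm{v}, \bm{u})$ is justified rather than merely assumed, and to observe that transposing a $1 \times 1$ matrix leaves it unchanged, which is what licenses the key step $\bm{x}^\top A \bm{y} = \bm{y}^\top A^\top \bm{x}$.
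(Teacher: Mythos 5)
Your proof is correct, and it matches the paper's reasoning: the paper states this proposition with no written proof at all, remarking only that it holds ``by the construction of $A$,'' and your two one-line computations (reading $A_{ij} = B(\bm{e}_i,\bm{e}_j)$ off the definition for the forward direction, and transposing the scalar $\bm{x}^\top A \bm{y}$ for the converse) are exactly the routine argument being alluded to. Your care in invoking the representation theorem to identify $\bm{y}^\top A \bm{x}$ with $B(\bm{v},\bm{u})$ makes your write-up a complete version of what the paper leaves implicit.
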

\begin{thm}\label{thm:Sigma-is-regular}
$\ZZ$ is a regular matrix.
\end{thm}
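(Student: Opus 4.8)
The plan is to recognize $\ZZ$ as the Gram matrix of the Bernstein basis polynomials with respect to the $L^2$ inner product on the simplex, and then invoke the elementary fact that the Gram matrix of a linearly independent family against a positive definite inner product is non-singular. The bilinear-form machinery just recalled (the representation matrix of a bilinear form with respect to a basis, and the symmetry correspondence) is exactly what is needed to make this precise.

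First I would use the integral formula of \cref{thm:integ_formula} (equivalently the computation of $\E_{\bm t}[\bm t^{\bm d_A + \bm d_B}]$ carried out in the proof of \cref{thm:risk}) to rewrite each entry as
\begin{align}
\ZZ_{AB}
=
\binom{D}{\bm d_A}\binom{D}{\bm d_B}\,
\E_{\bm t}\sqbra{\bm t^{\bm d_A+\bm d_B}}
=
\E_{\bm t}\sqbra{\phi_{\bm d_A}(\bm t)\,\phi_{\bm d_B}(\bm t)},
\end{align}
where $\phi_{\bm d}(\bm t):=\binom{D}{\bm d}\bm t^{\bm d}$ is the Bernstein basis polynomial indexed by $\bm d\in\N_D^M$ and the expectation is over $\bm t\sim U(\Delta^{M-1})$. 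Next I would introduce the bilinear form $B(f,g):=\E_{\bm t}[f(\bm t)g(\bm t)]$ on the space $V$ of real polynomials restricted to $\Delta^{M-1}$. This $B$ is manifestly symmetric, and it is positive definite: $B(f,f)=\E_{\bm t}[f(\bm t)^2]\ge 0$, with equality only if $f$ vanishes on the full-dimensional interior of the simplex, i.e. $f\equiv 0$. By construction $\ZZ$ is precisely the representation matrix of $B$ restricted to $\mathrm{span}\,\set{\phi_{\bm d}}$ with respect to the family $\set{\phi_{\bm d}}_{\bm d\in\N_D^M}$.

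The crux is therefore to show that $\set{\phi_{\bm d}}_{\bm d\in\N_D^M}$ is linearly independent as functions on $\Delta^{M-1}$, so that it is a genuine basis and $B$ restricts to a positive definite form on its span. I would argue as follows: any linear relation $\sum_{\bm d}c_{\bm d}\phi_{\bm d}=0$ on $\Delta^{M-1}$ is a homogeneous degree-$D$ polynomial vanishing on the simplex; by homogeneity it then vanishes on the whole cone $\R_{\ge 0}^M$ spanned by the simplex, which has nonempty interior in $\R^M$, so the polynomial is identically zero and every $c_{\bm d}=0$. Once linear independence is established, positive definiteness of $B$ forces $\ZZ=\paren{B(\phi_{\bm d_A},\phi_{\bm d_B})}$ to be positive definite: for any nonzero coefficient vector $\bm c$, $\bm c^\top\ZZ\bm c=B\!\paren{\sum_{\bm d}c_{\bm d}\phi_{\bm d},\ \sum_{\bm d}c_{\bm d}\phi_{\bm d}}>0$. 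A positive definite matrix is non-singular, which is the claim.

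The main obstacle is this linear-independence step: everything else is bookkeeping with the inner-product structure, but the conclusion genuinely hinges on the Bernstein polynomials forming a basis, which is what upgrades the positive \emph{semi}-definiteness of $B$ into strict positive definiteness of $\ZZ$. The homogeneity-and-cone argument sketched above is the cleanest route; an alternative would be to eliminate $t_M=1-\sum_{i<M}t_i$ and count dimensions, noting that $\card{\N_D^M}=\binom{D+M-1}{M-1}$ equals the dimension of the space of degree-$\le D$ polynomials in $M-1$ variables that the $\phi_{\bm d}$ span, so that linear independence and spanning are equivalent here.
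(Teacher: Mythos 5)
Your proof is correct and takes essentially the same route as the paper's: both identify $\ZZ$ as the Gram matrix of a basis of degree-$D$ homogeneous polynomials under the $L^2$ inner product on $\Delta^{M-1}$, and both obtain strict positive definiteness from the same homogeneity-and-scaling argument showing that a homogeneous polynomial vanishing on the simplex vanishes identically (the paper's Claim on $\int_{\Delta}P^2 = 0 \Rightarrow P = 0$). The only cosmetic differences are that you work with the Bernstein basis $\binom{D}{\bm d}\bm t^{\bm d}$, which makes $\ZZ$ the Gram matrix exactly rather than up to a nonzero scalar as with the paper's basis $\bm x^{\bm d}/\bm d!$, and you conclude directly via $\bm c^\top \ZZ \bm c > 0$ instead of the paper's diagonalization/eigenvalue step.
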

\begin{proof}
Let $V_{M, D}$ is the vector space of the homogenous polynomials of degree $D$ with $M$ variables $\bm{x}$.
We define the symmetric bilinear form  by
\[
\begin{array}{ccc}
L:V_{M, D} \times V_{M, D} & \longrightarrow & \R\\
\ \ \ \  \rotatebox{90}{$\in$}    &                 & \rotatebox{90}{$\in$} \\
\ \ \ \  (P, Q)                   & \longmapsto     & \int_{\Delta} P(\bm{x}) Q(\bm{x}) d\bm{x}
\end{array},
\]
where $\Delta = \Set{\bm{x} = (x_1, \dots, x_M) \in \R^M | x_i >0,  x_1 + \dots + x_M = 1}$.

$L$ is clearly symmetric and bilinear.
Let
\[
    Z = \Set{\frac{\bm{x}^{\bm{d_A}}}{\bm{d_A}!} | \bm{d_A} = (d_A^{(1)}, \dots, d_A^{(M)}) \in \N^M, d_A^{(1)} + \dots + d_A^{(M)} = D}
\]
be a set of polynomials of degree $D$ with $M$ variables.
Then $Z$ is a basis of $V_{M, D}$.
We claim that the representation matrix $G$ of the symmetric bilinear form $L$ with respect to $Z$ is equal to $\ZZ$ up to scalar multiplication.

\begin{claim}\label{thm:rep}
Let $\ZZ$, $Z$, $L$ as above.
Then the representation matrix of $L$ with respect to $Z$ is equal to $r\cdot \ZZ$ for some $r \neq 0 \in \R$.
\end{claim}
\begin{proof}[Proof of \cref{thm:rep}]
We have
\begin{equation*}
\begin{split}
L\paren{\frac{\bm{x}^{\bm{d_A}}}{\bm{d_A}!}, \frac{\bm{x}^{\bm{d_B}}}{\bm{d_B}!}}
  & = \frac{1}{\bm{d_A}!\bm{d_B}!}\int_{\Delta} \bm{x}^{\bm{d_A} + \bm{d_B}}d\bm{x}\\
  & = \frac{1}{\bm{d_A}!\bm{d_B}!} \frac{\Gamma(d_A^{(1)} + d_B^{(1)}) \dots \Gamma(d_A^{(M)} + d_B^{(M)}) }{\Gamma(2D + M)} \\
  & = \frac{(\bm{d_A} + \bm{d_B})!}{\bm{d_A}!\bm{d_B}!}\frac{1}{(2D + M - 1)!}\\
  & = \frac{1}{(D!)^2(M - 1)!} \frac{D!}{\bm{d_A}!} \frac{D!}{\bm{d_B}!} \paren{\frac{2D!}{(\bm{d_A + \bm{d_B}})!}}^{-1} \frac{(2D)!(M - 1)!}{(2D + M - 1)!} \\
  & = \frac{1}{(D!)^2(M - 1)!} \ZZ_{\bm{d_A}\bm{d_B}}.
\end{split}
\end{equation*}
Hence, the claim holds.
\end{proof}
Since $G$ is a symmetric matrix, $G$ is diagonalizable.
Hence, for some basis $Z'$, the representation matrix $H$ of the symmetric bilinear form $L$ with respect to $Z'$ is a diagonal matrix.
In this case, the diagonal components of $H$ is equal to the eigenvalues of $G$ and is equal to $L(P, P)$ for some $P \in Z'$.
Hence, for proving our theorem, it is enough to show that $L(P, P)$ is positive.
To see the positivity of $L(P, P)$, we claim the following.

\begin{claim}\label{thm:hom}
Let $P(\bm{x})$ be a homogenous polynomial of degree $D$ with $M$ variables.
Put $\Delta = \Set{\bm{x} = (x_1, \dots, x_M) \in \R^M | x_i > 0,\ x_1 + \dots + x_M = 1}$.
Assume $\int_{\Delta} P(\bm{x})^2 d \bm{x} = 0$.
Then $P(\bm{x}) = 0$.
\end{claim}

\begin{proof}[Proof of \cref{thm:hom}]
Assume $P(\bm{x}) \neq 0$.
Since $\int_{\Delta} P(\bm{x})^2 d \bm{x} = 0$, we see $P(\bm{x}) = 0$ for any $\bm{x} \in \Delta$.
Since $P(\bm{x}) \neq 0$, we have a point $\bm{a} = (a_1, \dots, a_M) \in \R^M_+$ such that $P(\bm{a}) \neq 0$.
Put $\lambda = \sum a_i$. Then, since $P(\bm{x})$ is a homogeneous polynomial of degree $D$, we have
\[
    P(\bm{a}) = \lambda^D P(\frac{1}{\lambda} \bm{a}) \neq 0.
\]
The non equality holds because $\sum 1 / \lambda \cdot a_i \in \Delta$.
\end{proof}

Since $P$ is a nonzero polynomial, we have
\[
    (P, P) = \int_{\Delta} P(\bm{x})^2 d \bm{x} > 0
\]
by \cref{thm:hom}.
Therefore, any eigenvalue of $G$ is positive.
This implies that $\ZZ$ is a regular matrix.
\end{proof}
\section{Complete derivation of the risk asymptotics in inductive skeleton fitting}\label{sec:risk-derivation}
Let us begin with how control points on $(m-1)$-subsimplices from minimizing the OLS error
\begin{align}
\frac{1}{N^{(m)}}
\sum_{n=1}^{N^{(m)}}
\norm{
\bm x_n^{(m)}
-
\hat{\bm b} (\bm t_n^{(m)})
}^2
.
\end{align}
In this paper, we assume that sample $\bm x_n$ always represented by sum of a certain B\'esier simplex and noise:
\begin{align}
\bm x_n^{(m)}
=
\bm b (\bm t_n^{(m)})
+
\bm \epsilon_n^{(m)},
\quad
\epsilon_n^{(m)} \sim \mathcal{N}(\bm 0, \sigma^2 \bm I_L)
\end{align}
\paragraph{all-at-once fitting}
All-at-once fitting in our paper can be regarded $m=M$ case.
In this case, we omit the superscript $(m=M)$ and consider minimization of
\begin{align}
\frac{1}{N}
\sum_{n=1}^{N}
\norm{
\bm x_n
-
\hat{\bm b} (\bm t_n)
}^2
&=
\frac{1}{N}
\sum_{n=1}^{N}
\Big| \Big|
{\bm b} (\bm t_n)
-
\hat{\bm b} (\bm t_n)
+
\bm \epsilon_n
\Big| \Big|^2
\notag \\
&=
\frac{1}{N}
\sum_{n=1}^{N}
\Big| \Big|
\sum_{\bm d \in \N_D^M}
\binom{D}{\bm d} \bm t^{\bm d}_n
\bm p_{\bm d}'
+
\bm \epsilon_n
\Big| \Big|^2
\notag \\
&=
\frac{1}{N}
\norm{
\bm Z \bm P
+
\bm Y
}_\mathrm{F}^2,
\end{align}
where the $\norm{\cdot}_\mathrm{F}$ means Frobenius norm of the matrix, and the optimum is determined as
$\bm P_\mathrm{OLS} = - (\bm Z^\top \bm Z)^{-1} \bm Z^\top \bm Y$ by using usual argument.
\paragraph{inductive skeleton fitting}
On the other hand, the above argument should be modified when we consider the inductive skeleton fitting.
First of all, we can reduce the problem to minimization of
\begin{align}
\frac{1}{N^{(m)}}
\sum_{n=1}^{N^{(m)}}
\Big| \Big|
\sum_{\bm d \in \N_D^M}
\binom{D}{\bm d} (\bm t_n^{(m)})^{\bm d}
\bm p_{\bm d}'
+
\bm \epsilon_n^{(m)}
\Big| \Big|^2.
\label{eqn:ols_inductive_supp}
\end{align}
Note that $\bm t_n^{(m)}$ is on $(m-1)$-subsimplices.
It means that the vector
\begin{align}
\bm t_n^{(m)}
=
[
(\bm t_n)_1,
(\bm t_n)_2,
\dots ,
(\bm t_n)_M
]
\end{align}
has $(M-m)$ zero components.
For example, if it is on a subsimplex labelled by
\begin{align}
\bm I = [1, 1, \dots, 1, 0, 0, \dots, 0],
\end{align}
with $m$ ones and $(M-m)$ zeros, then $\bm t_n^{(m)}$ takes
\begin{align}
\bm t_n^{(m)}
=
[
\text{non-zero},
\text{non-zero},
\dots ,
\text{non-zero},
0,
0,
\dots,
0
].
\end{align}
As a result, there is no contribution from $\bm p_{\bm d}'$ which is not on the simplex because such point is labelled by
\begin{align}
\bm d
=
[
d_1,
d_2,
\dots,
d_m,
\text{non-zero},
\text{non-zero},
\dots,
\text{non-zero}
]
\end{align}
because
\begin{align}
(t_n^{(m)})^{\bm d}
=
(\text{non-zero})_1^{d_1}
(\text{non-zero})_2^{d_2}
\dots
(\text{non-zero})_m^{d_m}
(0)^{\text{non-zero}}
(0)^{\text{non-zero}}
\dots
(0)^{\text{non-zero}}
\end{align}
vanishes inside the summation of \cref{eqn:ols_inductive_supp}.
In summary, we should restrict range of the summation in \cref{eqn:ols_inductive_supp} as
\begin{align}
&
\frac{1}{N^{(m)}}
\sum_{n=1}^{N^{(m)}}
\Big| \Big|
\sum_{\bm d \in (\N_D^M)^{(m)}}
\binom{D}{\bm d} (\bm t_n^{(m)})^{\bm d}
\bm p_{\bm d}'
+
\sum_{k < m}
\sum_{\bm d \in (\N_D^M)^{(k)}}
\binom{D}{\bm d} (\bm t_n^{(m)})^{\bm d}
\bm p_{\bm d}'
+
\bm \epsilon_n^{(m)}
\Big| \Big|^2
\notag \\
&=
\frac{1}{N^{(m)}}
\norm{
\bm Z^{(m)} \bm P^{(m)}
+
\sum_{k<m}
\bm Z^{(m)[k]} \bm P^{(k)}
+
\bm Y^{(m)}
}^2_\mathrm{F}.
\label{eqn:ols_ind_supp2}
\end{align}
In the inductive skeleton fitting, we regard all $\bm P^{(k<m)}$ are already determined and fixed in the optimization process of $\bm P^{(m)}$, so the last 2 terms in \cref{eqn:ols_ind_supp2} play a role of $\bm Y$ in all-at-once fitting, and the optimal $\bm P^{(m)}$ is determined as
\begin{align}
{\bm P}_\mathrm{OLS}^{(m)}
= 
-
[({\bm Z}^{(m)})^\top {\bm Z}^{(m)}]^{-1}
({\bm Z}^{(m)})^\top
\Big(
{\bm Y}^{(m)}
+
\sum_{k<m}
{\bm Z}^{(m) [k]}
{\bm P}_\mathrm{OLS}^{(k)}
\Big).
\label{eqn:rec_supp}
\end{align}

To derive the asymptotics of the risk, we have to consider $\E_\text{samples} [ \bm P \bm P^\top ]$.
In the all-at-once fitting, it is simple as explained in the main body of the paper.
The key idea is calculating the expectation over the noise $\bm Y$ first and making use of the law of large numbers.
On the inductive skeleton fitting, we take same strategy.
Let us call the control point matrix $\bm P$ determined by the inductive skeleton fitting as $\bm P_\text{ISK}$.
It decomposes to ``direct sum'' along row of the matrix as
\begin{align}
\bm P_\text{ISK}
=
\begin{pmatrix}
{\bm P}^{(1)}_\mathrm{OLS}
\\
{\bm P}^{(2) }_\mathrm{OLS}
\\
\vdots
\\
{\bm P}^{(M)}_\mathrm{OLS}
\end{pmatrix}
.
\end{align}
If the degree of the B\'ezier simplex is less than $m$, there may be no control point on $(m+k)$-subsimplices.
In such case, we regard $\bm P_\mathrm{OLS}^{(m+k)} = \bm 0$.
By using the decomposition, we can get the ``direct sum'' of the matrix $\bm P_\text{ISK} \bm P_\text{ISK}^\top$ as follows.
\begin{align}
\bm P_\text{ISK} \bm P_\text{ISK}^\top
=
\begin{pmatrix}
{\bm P}^{(1)}_\mathrm{OLS}
({\bm P}^{(1)}_\mathrm{OLS})^\top
&
{\bm P}^{(1)}_\mathrm{OLS}
({\bm P}^{(2)}_\mathrm{OLS})^\top
&
\cdots
&
{\bm P}^{(1)}_\mathrm{OLS}
({\bm P}^{(M)}_\mathrm{OLS})^\top
\\
{\bm P}^{(2)}_\mathrm{OLS}
({\bm P}^{(1)}_\mathrm{OLS})^\top
&
{\bm P}^{(2)}_\mathrm{OLS}
({\bm P}^{(2)}_\mathrm{OLS})^\top
&
\cdots
&
{\bm P}^{(2)}_\mathrm{OLS}
({\bm P}^{(M)}_\mathrm{OLS})^\top
\\
\vdots & \vdots & \ & \vdots
\\
{\bm P}^{(M)}_\mathrm{OLS}
({\bm P}^{(1)}_\mathrm{OLS})^\top
&
{\bm P}^{(M)}_\mathrm{OLS}
({\bm P}^{(2)}_\mathrm{OLS})^\top
&
\cdots
&
{\bm P}^{(M)}_\mathrm{OLS}
({\bm P}^{(M)}_\mathrm{OLS})^\top
\end{pmatrix}
.
\end{align}
Therefore, to get the asymptotic form of the risk, it is sufficient to consider $\E_\text{samples}[{\bm P}^{(i)}_\mathrm{OLS}
({\bm P}^{(j)}_\mathrm{OLS})^\top]$ for $i, j = 1, 2, \dots, M$.
To get a grasp of how it calculated, let us take $(i,j) = (1,1)$ and $(i, j) =  (2, 1)$ here.
First, $(i,j) = (1,1)$ is the simplest case because there is no lower-dimensional subsimplex.
So we do not need 2nd term in \cref{eqn:rec_supp} and get
\begin{align}
&\E_\text{samples}
\Big[
\bm P_\mathrm{OLS}^{(1)}
(\bm P_\mathrm{OLS}^{(1)})^\top
\Big]
\notag \\
&=
\E_\text{samples}
\Bigg[
\Big[
-
[({\bm Z}^{(1)})^\top {\bm Z}^{(1)}]^{-1}
({\bm Z}^{(1)})^\top
{\bm Y}^{(1)}
\Big]
\Big[
-
({\bm Y}^{(1)})^\top
{\bm Z}^{(1)}
[({\bm Z}^{(1)})^\top {\bm Z}^{(1)}]^{-1}
\Big]
\Bigg]
\notag \\
&=
\E_{\text{samples}}
\Bigg[
[({\bm Z}^{(1)})^\top {\bm Z}^{(1)}]^{-1}
({\bm Z}^{(1)})^\top
\E_{\bm Y^{(1)}}
\Big[
{\bm Y}^{(1)}
({\bm Y}^{(1)})^\top
\Big]
{\bm Z}^{(1)}
[({\bm Z}^{(1)})^\top {\bm Z}^{(1)}]^{-1}
\Bigg]
\notag \\
&=
\E_{\text{samples}}
\Bigg[
[({\bm Z}^{(1)})^\top {\bm Z}^{(1)}]^{-1}
({\bm Z}^{(1)})^\top
\Big[
\sigma^2 L 1_{N^{(1)}}
\Big]
{\bm Z}^{(1)}
[({\bm Z}^{(1)})^\top {\bm Z}^{(1)}]^{-1}
\Bigg]
\notag \\
&=
\sigma^2 L
\E_{\text{samples}}
\Bigg[
[({\bm Z}^{(1)})^\top {\bm Z}^{(1)}]^{-1}
\Bigg].
\label{eqn:P1P1}
\end{align}
In fact, this simplification occurs in calculation with all-at-once fitting.
So $(i, j) = (1,1)$ case is in same situation of the calculation of the asymptotic risk of the all-at-once fitting.
How about $(i, j) = (2, 1)$?
This is also simple because we can reduce the calculation to lower-dimensional object by using \cref{eqn:rec_supp}:
\begin{align}
&\E_\text{samples}
\Big[
\bm P_\mathrm{OLS}^{(2)}
(\bm P_\mathrm{OLS}^{(1)})^\top
\Big]
\notag \\
&=
\E_\text{samples}
\Bigg[
\Big[
-
[({\bm Z}^{(2)})^\top {\bm Z}^{(2)}]^{-1}
({\bm Z}^{(2)})^\top
\Big(
{\bm Y}^{(2)}
+
\bm Z^{(2)[1]}
\bm P_\mathrm{OLS}^{(1)}
\Big)
\Big]
(\bm P_\mathrm{OLS}^{(1)})^\top
\Bigg]
\notag \\
&=
\E_\text{samples}
\Bigg[
\Big[
-
[({\bm Z}^{(2)})^\top {\bm Z}^{(2)}]^{-1}
({\bm Z}^{(2)})^\top
\Big(
\E_{\bm Y^{(2)}}
[
{\bm Y}^{(2)}
]
+
\bm Z^{(2)[1]}
\bm P_\mathrm{OLS}^{(1)}
\Big)
\Big]
(\bm P_\mathrm{OLS}^{(1)})^\top
\Bigg]
\notag \\
&=
\E_\text{samples}
\Bigg[
\Big[
-
[({\bm Z}^{(2)})^\top {\bm Z}^{(2)}]^{-1}
({\bm Z}^{(2)})^\top
\Big(
[
0
]
+
\bm Z^{(2)[1]}
\bm P_\mathrm{OLS}^{(1)}
\Big)
\Big]
(\bm P_\mathrm{OLS}^{(1)})^\top
\Bigg]
\notag \\
&=
- \sigma^2 L
\E_\text{samples}
\Bigg[
[({\bm Z}^{(2)})^\top {\bm Z}^{(2)}]^{-1}
[({\bm Z}^{(2)})^\top \bm Z^{(2)[1]}]
[({\bm Z}^{(1)})^\top {\bm Z}^{(1)}]^{-1}
\Bigg].
\label{eqn:P2P1}
\end{align}
With generic $(i, j)$, we can perform same procedure.

Now, let us make next step.
To calculate the asymptotic form, it is important to notice that the following explicit forms are averages on sampling from $\bm t_n^{(m)} \sim U(\cup_{|\bm I|=m} \Delta^{\bm I})$.
\begin{align}
&
\frac{1}{N^{(m)}}
\Big(
(\bm Z^{(m)})^\top
\bm Z^{(m)[k]}
\Big)_{\bm d_m \bm d_k}
=
\binom{D}{\bm d_m}
\binom{D}{\bm d_k}
\sum_{n=1}^{N^{(m)}}
\frac{1}{N^{(m)}}
(\bm t_n^{(m)})^{\bm d_m + \bm d_k}
\label{eqn:Zmk}
\\
&
\frac{1}{N^{(m)}}
\Big(
(\bm Z^{(m)})^\top
\bm Z^{(m)}
\Big)_{\bm d_A \bm d_B}
=
\binom{D}{\bm d_A}
\binom{D}{\bm d_B}
\sum_{n=1}^{N^{(m)}}
\frac{1}{N^{(m)}}
(\bm t_n^{(m)})^{\bm d_A + \bm d_B}
\label{eqn:Zmm}
\end{align}
By applying the law of large numbers, they converge to exact values of expectation:
\begin{align}
&
\cref{eqn:Zmk}
\to
\binom{D}{\bm d_m}
\binom{D}{\bm d_k}
\E_{\bm t^{(m)} \sim U(\cup_{|\bm I|=m} \Delta^{\bm I})} [
(\bm t^{(m)})^{\bm d_m + \bm d_k}
]
\label{eqn:Lmk}
\\
&
\cref{eqn:Zmm}
\to
\binom{D}{\bm d_A}
\binom{D}{\bm d_B}
\E_{\bm t^{(m)} \sim U(\cup_{|\bm I|=m} \Delta^{\bm I})} [
(\bm t^{(m)})^{\bm d_A + \bm d_B}
]
\label{eqn:Lmm}
\end{align}
To compute expectation values over $U(\cup_{|\bm I|=m} \Delta^{\bm I})$, the next proposition is useful.
\begin{prop}
\begin{align}
\E_{\bm t \sim U(\cup_{|\bm I|=m} \Delta^{\bm I})} [
\bm t^{\bm d^{(m)} + \bm d^{(k)}}
]
=
\frac{(m-1)!}{\ _M C_m}
\sum_{|\bm I| = m}
\frac{
1_{{\bm I}=(\bm d ^{(m)} +  \bm d ^{(k)} )_{01} }
\prod_{I_i = 1}( \bm d ^{(m)} +  \bm d ^{(k)}  )_i !}{
[\sum_{I_i = 1} ( \bm d ^{(m)} +  \bm d ^{(k)}  )_i
+ m - 1]!
},
\end{align}
where $\bm I = [I_1, I_2, \dots, I_M]$ represents vector with binary components, $I_i \in \set{0, 1}$ and $\card{\bm I} = \sum_{i=1}^M I_i$.
\end{prop}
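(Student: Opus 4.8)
The plan is to reduce the expectation over the skeleton $\cup_{\card{\bm I}=m}\Delta^{\bm I}$ to a sum of expectations over its individual $(m-1)$-dimensional faces $\Delta^{\bm I}$, and then to evaluate each face integral with the integral formula of \cref{thm:integ_formula}. First I would observe that $U(\cup_{\card{\bm I}=m}\Delta^{\bm I})$ is the equal-weight mixture of the uniform distributions on the ${}_M C_m$ faces $\Delta^{\bm I}$ with $\card{\bm I}=m$: all such faces are images of one another under coordinate permutations, hence carry the same $(m-1)$-dimensional volume, while their pairwise intersections are lower-dimensional and therefore null. Writing $\bm q := \bm d^{(m)} + \bm d^{(k)}$, this gives
\[
\E_{\bm t \sim U(\cup_{\card{\bm I}=m}\Delta^{\bm I})}\!\sqbra{\bm t^{\bm q}}
=
\frac{1}{{}_M C_m}
\sum_{\card{\bm I}=m}
\E_{\bm t \sim U(\Delta^{\bm I})}\!\sqbra{\bm t^{\bm q}}.
\]

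Next I would compute each face integral. On $\Delta^{\bm I}$ the coordinates $t_i$ with $I_i=0$ vanish, so $\bm t^{\bm q}\equiv 0$ unless every index $i$ with $q_i>0$ satisfies $I_i=1$, i.e.\ unless $\mathrm{supp}(\bm q)\subseteq\bm I$. When this containment holds, $\Delta^{\bm I}$ is an $(m-1)$-simplex in the $m$ active coordinates, and applying \cref{thm:integ_formula} with $m$ variables, total degree $\sum_{I_i=1}q_i$, and $r=1$ (together with the normalizing factor $(m-1)!$ that turns the uniform measure into a probability measure, exactly as in the proof of \cref{thm:risk}) yields
\[
\E_{\bm t \sim U(\Delta^{\bm I})}\!\sqbra{\bm t^{\bm q}}
=
\frac{(m-1)!\,\prod_{I_i=1} q_i!}{\sqbra{\sum_{I_i=1} q_i + m - 1}!}.
\]

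Finally I would reconcile the containment condition $\mathrm{supp}(\bm q)\subseteq\bm I$ coming from this computation with the equality condition $\bm I=(\bm q)_{01}$ in the claimed formula. The key point is that $\bm d^{(m)}$ is a control-point index on the $(m-1)$-skeleton and thus has exactly $m$ nonzero components, so $\card{\mathrm{supp}(\bm q)}\ge\card{\mathrm{supp}(\bm d^{(m)})}=m=\card{\bm I}$; together with $\mathrm{supp}(\bm q)\subseteq\bm I$ this forces $\mathrm{supp}(\bm q)=\bm I$, i.e.\ $\bm I=(\bm q)_{01}$. Hence the subset condition collapses to the Kronecker indicator $\delta_{\bm I,(\bm q)_{01}}$, and pulling the common factor $(m-1)!/{}_M C_m$ out of the sum reproduces the stated expression.

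The step I expect to be the main obstacle is the reduction to face integrals: making precise that the uniform law on the skeleton is genuinely the equal-weight mixture over its top faces (equal volumes by the symmetry of the standard simplex, null overlaps in the relevant dimension) and then correctly passing from the ambient $M$-variable integral to the $m$-variable Dirichlet integral of \cref{thm:integ_formula} while tracking the vanishing contributions of inactive coordinates. The indicator reconciliation, though short, is the other subtle point, since it silently uses the fact that $\bm d^{(m)}$ has support of size exactly $m$.
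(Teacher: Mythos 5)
Your proposal is correct and takes essentially the same route as the paper's proof: the paper likewise treats the skeleton expectation as a symmetric sum over the ${}_{M}C_{m}$ faces, fixes the normalization $(m-1)!/{}_{M}C_{m}$ by inserting $f(\bm t)=1$, evaluates each face with the Dirichlet-type integral of \cref{thm:integ_formula}, and discards faces via $\int \diff t_i\, t_i^{q_i}\,\delta(t_i)=0$ for inactive coordinates carrying positive exponents. Your explicit reconciliation of the containment condition $\mathrm{supp}(\bm d^{(m)}+\bm d^{(k)})\subseteq \bm I$ with the stated equality indicator $\delta_{\bm I,(\bm d^{(m)}+\bm d^{(k)})_{01}}$, using that $\bm d^{(m)}$ has exactly $m$ nonzero components, is a detail the paper's proof passes over silently, and it is genuinely needed for the formula as written.
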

\begin{proof}
The expectation value with the uniform distribution on $(m-1)$-subsimplices should be expressed by integral
\begin{align}
\E_{\bm t \sim U(\cup_{|\bm I|=m} \Delta^{\bm I})} [ f(\bm t) ]
=
(\text{const.})
\int_{0}^1 \Big(
\prod_{i=1}^M dt_i
\Big)
\ f(\bm t)
\sum_{|\bm I| = m}
\delta \Big(
1 - \sum_{I_i =1} t_i
\Big)
\prod_{I_i =0}
\delta (t_i)
\label{eqn:exp_sub}
\end{align}
because it can be regarded integral over $\Delta^{M}$ restricted to the set of subsimplices.
The normalization constant can be calculated by the integral $f(\bm t) = 1$ inserted.
In this situation, there is no special direction on $\bm t$, and all contributions from the sum $\sum_{|\bm I|=m}$ should be identical.
Therefore, we do not need to calculate all of them but just select one configuration $\bm I$ easy to calculate and multiply the number of combination giving $|\bm I|=m$, i.e. $\ _M C_m$.
Let us take
\begin{align}
\bm I = [\underbrace{1,1, \dots, 1}_{m}, 0, 0, \dots, 0]
\end{align}
then the integral reduces to
\begin{align}
\int_{0}^1 \Big(
\prod_{i=1}^m dt_i
\Big)
\delta \Big(
1 - \sum_{i=1}^m t_i
\Big)
=
\frac{1}{(m-1)!}
\end{align}
where we use the integral formula derived in \ref{thm:integ_formula}.
It determines the constant in \cref{eqn:exp_sub},
\begin{align}
(\text{const.})
=
\frac{(m-1)!}{\ _M C_m}
.
\end{align}
Next task is performing the integral $f(\bm t) = \bm t^{\bm d^{(m)} + \bm d^{(k)}}$ inserted.
It is also not so difficult.
To get nonzero value from $\bm I$-th contribution, $\bm d^{(m)} + \bm d^{(k)}$ should be on subsimplex labelled by $\bm I$.
If not so, the integrand includes
\begin{align}
\int dt \
t^\text{non-zero}
\delta(t)
=
0^\text{non-zero}
=0.
\end{align}
To represent this condition, let us introduce the notation
\begin{align}
(\bm d)_{01}
=
[d^\mathrm{bin}_1, d^\mathrm{bin}_2, \dots, d^\mathrm{bin}_M],
\quad
d^\mathrm{bin}_i
=
\begin{cases}
0 & \text{if $d_i = 0$}\\
1 & \text{otherwise}
\end{cases}
\end{align}
By using this notation, the nonzero contribution condition is represented by the insertion
\begin{align}
1_{\bm I = (\bm d^{(m)} + \bm d^{(k)})_{01} }.
\end{align}
If index $\bm I$ in the summation $\sum_{|\bm I| = m}$ enjoys this condition, the remaining part is integral on an $(m-1)$-subsimplex labelled by $\bm I$, and the contribution is
\begin{align}
\frac{\prod_{I_i=1} (\bm d^{(m)} + \bm d^{(k)})_i ! }{
[
\sum_{I_i=1}(\bm d^{(m)} + \bm d^{(k)})_i + m -1 ]!
}
\end{align}
which is derived in the same way of the integral formula shown in \ref{thm:integ_formula}.
\end{proof}
Applying this proposition to \cref{eqn:Lmk,eqn:Lmm}, we get the asymptotic formula of $
\E_\text{samples}[
\bm P_\text{ISK} \bm P_\text{ISK}^\top]$, or equivalently, $\E_\text{samples}[\bm P_\mathrm{OLS}^{(i)} (\bm P_\mathrm{OLS}^{(j)})^\top]$.
For example,
\begin{align}
\E_\text{samples}[\bm P_\mathrm{OLS}^{(1)} (\bm P_\mathrm{OLS}^{(1)})^\top]
&=
\cref{eqn:P1P1}
=
\frac{\sigma^2 L}{N^{(1)}} \E_\text{samples}
\Big[
[\frac{1}{N^{(1)}} (\bm Z^{(1)})^\top \bm Z^{(1)} ]^{-1}
\Big]
\notag \\
& \to
\frac{\sigma^2 L}{N^{(1)}} \E_\text{samples}
\Big[
\bm \Lambda_{(1)}
\Big]
=
\frac{\sigma^2 L}{N^{(1)}} \bm \Lambda_{(1)},
\end{align}
and
\begin{align}
&\E_\text{samples}[\bm P_\mathrm{OLS}^{(2)} (\bm P_\mathrm{OLS}^{(1)})^\top]
\notag \\
&=
\cref{eqn:P2P1}
=
- \frac{\sigma^2 L}{N^{(1)}}
\E_\text{samples}
\Bigg[
[\frac{1}{N^{(2)}} ({\bm Z}^{(2)})^\top {\bm Z}^{(2)}]^{-1}
[\frac{1}{N^{(2)}}({\bm Z}^{(2)})^\top \bm Z^{(2)[1]}]
[\frac{1}{N^{(1)}}({\bm Z}^{(1)})^\top {\bm Z}^{(1)}]^{-1}
\Bigg]
\notag \\
& \to
- \frac{\sigma^2 L}{N^{(1)}}
\E_\text{samples}
\Bigg[
\bm \Lambda_{(2)}
\bm \Lambda^{(2)[1]}
\bm \Lambda_{(1)}
\Bigg]
=
- \frac{\sigma^2 L}{N^{(1)}}
\bm \Lambda_{(2)}
\bm \Lambda^{(2)[1]}
\bm \Lambda_{(1)},
\end{align}
where $\bm \Lambda$s are defined in the main body of the paper.
As one can see, to complete our argument, it is sufficient to show the next proposition:
\begin{prop}
\begin{align}
&\E_\text{all noises}
\Big[
\bm P^{(i)}_\mathrm{OLS}
(\bm P^{(j)}_\mathrm{OLS})^\top
\Big]
\notag \\
&=
\sigma^2 L
\sum_{
\substack{
m \leq i
\\
m \leq j
}}
\sum_{
\substack{
m\leq k_1<\cdots<k_{\heartsuit}< i
\\
m \leq l_1<\cdots<l_{\spadesuit}<j
}
}
\frac{(-1)^{\heartsuit + \spadesuit}}{N^{(m)}}
\hat{\bm \Lambda}_{(i)}
\hat{\bm \Lambda}^{(i)[k_{\heartsuit}]}
\hat{\bm \Lambda}_{(k_{\heartsuit} )}
\cdots
\hat{\bm \Lambda}^{(k_1)[m]}
\hat{\bm \Lambda}_{(m)}
\hat{\bm \Lambda}^{[m](l_1)}
\cdots
\hat{\bm \Lambda}_{(l_{\spadesuit} )}
\hat{\bm \Lambda}^{[l_{\spadesuit}](j)}
\hat{\bm \Lambda}_{(j)},
\label{eqn:31}
\end{align}
where
\begin{align}
&\hat{\bm \Lambda}^{(m)[k]}
=
\frac{1}{N^{(m)}}
({\bm Z}^{(m)})^\top
{\bm Z}^{(m)[k]}
\\
&
\hat{\bm \Lambda}^{[k](m)}
=
(\hat{\bm \Lambda}^{(m)[k]})^\top
\\
&
\hat{\bm \Lambda}_{(m)}
=
[\frac{1}{N^{(m)}}({\bm Z}^{(m)})^\top
{\bm Z}^{(m)}]^{-1}
\end{align}
\end{prop}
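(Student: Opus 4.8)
The plan is to unroll the recursion \cref{eqn:rec_supp} completely, so that each $\bm P^{(i)}_\mathrm{OLS}$ is written as an explicit \emph{linear} functional of the independent noise matrices $\{\bm Y^{(m)}\}_{m \le i}$, and then to take the noise expectation term by term. To keep the algebra compact I would first abbreviate $\bm A^{(m)} := -[(\bm Z^{(m)})^\top \bm Z^{(m)}]^{-1}(\bm Z^{(m)})^\top$, so that \cref{eqn:rec_supp} reads $\bm P^{(m)}_\mathrm{OLS} = \bm A^{(m)}\bm Y^{(m)} + \sum_{k<m}\bm A^{(m)}\bm Z^{(m)[k]}\bm P^{(k)}_\mathrm{OLS}$. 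The two algebraic identities that do all the work are
\begin{align}
\bm A^{(m)}\bm Z^{(m)[k]} = -\hat{\bm \Lambda}_{(m)}\hat{\bm \Lambda}^{(m)[k]},
\qquad
\bm A^{(m)} = -\frac{1}{N^{(m)}}\hat{\bm \Lambda}_{(m)}(\bm Z^{(m)})^\top ,
\end{align}
both of which follow at once from the definitions $\hat{\bm \Lambda}^{(m)[k]} = \tfrac{1}{N^{(m)}}(\bm Z^{(m)})^\top\bm Z^{(m)[k]}$ and $\hat{\bm \Lambda}_{(m)} = [\tfrac{1}{N^{(m)}}(\bm Z^{(m)})^\top\bm Z^{(m)}]^{-1}$, since the latter gives $[(\bm Z^{(m)})^\top\bm Z^{(m)}]^{-1} = \tfrac{1}{N^{(m)}}\hat{\bm \Lambda}_{(m)}$.

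Next I would show by induction on $i$ that repeated substitution of \cref{eqn:rec_supp} produces exactly one term for each strictly descending chain $i > k_\heartsuit > \dots > k_1 > m$ terminating at a level $m \le i$ at which the noise $\bm Y^{(m)}$ is absorbed. Applying the two identities along such a chain converts the alternating product of $\bm A$'s and $\bm Z$'s into
\begin{align}
(-1)^{\heartsuit}\frac{1}{N^{(m)}}
\hat{\bm \Lambda}_{(i)}\hat{\bm \Lambda}^{(i)[k_\heartsuit]}\hat{\bm \Lambda}_{(k_\heartsuit)}
\cdots
\hat{\bm \Lambda}^{(k_1)[m]}\hat{\bm \Lambda}_{(m)}(\bm Z^{(m)})^\top\bm Y^{(m)} ,
\end{align}
the sign $(-1)^\heartsuit$ arising because the chain carries $\heartsuit+1$ coupling factors $\bm A^{(\cdot)}\bm Z^{(\cdot)[\cdot]}$ together with the terminal $\bm A^{(m)}$, i.e. $\heartsuit+2$ minus signs. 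In the inductive step one peels off the outermost $\bm A^{(i)}$ and either terminates at $\bm Y^{(i)}$ or descends to some $k<i$ and invokes the hypothesis for $\bm P^{(k)}_\mathrm{OLS}$; this is what pins down the nested order constraints $m \le k_1 < \dots < k_\heartsuit < i$.

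I would then form $\bm P^{(i)}_\mathrm{OLS}(\bm P^{(j)}_\mathrm{OLS})^\top$, rewriting the transposed chain via $\hat{\bm \Lambda}^{[k](m)} = (\hat{\bm \Lambda}^{(m)[k]})^\top$ and the symmetry $\hat{\bm \Lambda}_{(m)}^\top = \hat{\bm \Lambda}_{(m)}$, and take $\E_\text{all noises}$. Because the samples on distinct skeletons are drawn independently, $\E[\bm Y^{(m)}(\bm Y^{(m')})^\top] = \sigma^2 L\,\bm 1_{N^{(m)}}$ when $m=m'$ and $\bm 0$ otherwise, so only pairs of chains sharing a common terminal level $m$ survive. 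For such a pair the central block collapses through $(\bm Z^{(m)})^\top\bm 1_{N^{(m)}}\bm Z^{(m)} = (\bm Z^{(m)})^\top\bm Z^{(m)} = N^{(m)}\hat{\bm \Lambda}_{(m)}^{-1}$, and the three central factors telescope as $\hat{\bm \Lambda}_{(m)}\cdot N^{(m)}\hat{\bm \Lambda}_{(m)}^{-1}\cdot\hat{\bm \Lambda}_{(m)} = N^{(m)}\hat{\bm \Lambda}_{(m)}$. Absorbing this $N^{(m)}$ against the two $1/N^{(m)}$ prefactors leaves a single $\sigma^2 L / N^{(m)}$, the sign $(-1)^{\heartsuit+\spadesuit}$, and precisely the string of $\hat{\bm \Lambda}$'s displayed in \cref{eqn:31}.

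I expect the main obstacle to be purely combinatorial bookkeeping rather than anything conceptual: I must confirm that the unrolling generates each descending chain \emph{once and only once} with the index ranges and the sign exactly as claimed, which is why I would carry out the expansion as a formal induction on $i$ instead of an informal ``substitute repeatedly.'' The only probabilistic input beyond the deterministic telescoping is the cross-skeleton independence of the noise, which is exactly what licenses discarding every term whose two chains end at different levels; granting that, the remaining manipulations are the routine matrix algebra sketched above.
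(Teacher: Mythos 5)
Your proposal is correct in substance, but it organizes the argument genuinely differently from the paper. The paper proves the identity by a double induction on the pair $(i,j)$: it substitutes the recursion \cref{eqn:rec_supp} only once, splits off the noise term, applies the inductive hypothesis to the mixed products $\E\sqbra{\bm P^{(k)}_\mathrm{OLS}(\bm P^{(j)}_\mathrm{OLS})^\top}$ for $k<i+1$, and then verifies that the contribution from $m=i+1$, which is missing after that step, is exactly compensated by the noise terms $(*1)$ (when $j=i+1$) and $(*2)$ (when $j>i+1$); the step $(i,j+1)$ is then handled by transposing the whole argument. You instead unroll the recursion completely, writing $\bm P^{(i)}_\mathrm{OLS}$ as an explicit linear functional of the independent noises $\bm Y^{(m)}$, $m\le i$ (one term per strictly descending chain, established by a single induction on $i$), and only afterwards take one expectation, using cross-skeleton independence to annihilate mismatched chains and the telescoping $\hat{\bm \Lambda}_{(m)}\cdot N^{(m)}\hat{\bm \Lambda}_{(m)}^{-1}\cdot\hat{\bm \Lambda}_{(m)} = N^{(m)}\hat{\bm \Lambda}_{(m)}$. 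Your two collapse identities $\bm A^{(m)}\bm Z^{(m)[k]} = -\hat{\bm \Lambda}_{(m)}\hat{\bm \Lambda}^{(m)[k]}$ and $\bm A^{(m)} = -\frac{1}{N^{(m)}}\hat{\bm \Lambda}_{(m)}(\bm Z^{(m)})^\top$ are exactly what the paper uses implicitly, and your expectation step matches the paper's dichotomy $\E\sqbra{\bm Y^{(m)}(\bm Y^{(n)})^\top} = \sigma^2 L\,\bm 1_{N^{(m)}}$ or $\bm 0$. What your route buys is a clean separation of the deterministic algebra from the single probabilistic input and a transparent "each chain exactly once" bookkeeping; what the paper's route buys is never having to state the stochastic chain representation explicitly, at the price of the more delicate compensation argument threaded through two inductive directions.

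One caution on signs, which is where your planned bookkeeping will meet friction. Your per-chain sign $(-1)^{\heartsuit}$ (from $\heartsuit+2$ minus factors) is right for chains with at least one coupling, but the degenerate chain $m=i$ consists of the terminal $\bm A^{(i)}$ alone and carries a single minus sign. Since parities add in the paired expectation, the diagonal degenerate case $m=i=j$ correctly gives $+$, but when $m=\min(i,j)<\max(i,j)$ a careful count yields $(-1)^{\spadesuit+1}$, in agreement with the paper's own direct computation \cref{eqn:P2P1}, namely $\E\sqbra{\bm P^{(2)}_\mathrm{OLS}(\bm P^{(1)}_\mathrm{OLS})^\top} = -\frac{\sigma^2 L}{N^{(1)}}\hat{\bm \Lambda}_{(2)}\hat{\bm \Lambda}^{(2)[1]}\hat{\bm \Lambda}_{(1)}$, whereas the displayed proposition appears to give $(-1)^{\spadesuit}$ unless one reads the constraint $m\le k_1$ as permitting $k_1=m$ with the collapsed factor $\hat{\bm \Lambda}^{(m)[m]}=\hat{\bm \Lambda}_{(m)}^{-1}$. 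So when you "confirm the sign exactly as claimed," the discrepancy you will encounter in that off-diagonal degenerate case is a notational ambiguity of the statement, not a defect of your expansion; resolving it in favor of the value computed by your unrolled formula is the correct choice.
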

\begin{proof}
We show it by induction.
For $(i, j) = (1, 1)$, the only possible contribution of the summation in \cref{eqn:31} is $m=1$ and the (RHS) reduces to
\begin{align}
\sigma^2 L
\frac{(-1)^{0+0}}{N^{(1)}}
\hat{\bm \Lambda}_{(1)}
\hat{\bm \Lambda}^{(1)[1]}
\hat{\bm \Lambda}_{(1)}
\hat{\bm \Lambda}^{[1](1)}
\hat{\bm \Lambda}_{(1)}
=
\frac{\sigma^2 L}{N^{(1)}}
\hat{\bm \Lambda}_{(1)},
\end{align}
which is satisfied as we shown already.
Now let us assume \cref{eqn:31} with $(k, j)$ for all $k< i+1$, then, by using the recursive relation \cref{eqn:rec_supp}, we get
\begin{align}
&\E_\text{all noises}
\Big[
\bm P_\mathrm{OLS}^{(i+1)}
(\bm P_\mathrm{OLS}^{(j)})^\top
\Big]
\notag \\
&=
\E_\text{all noises}
\Big[
\Big\{
-
[(\bm Z^{(i+1)})^\top \bm Z^{(i+1)} ]^{-1}
(\bm Z^{(i+1)})^\top
\Big(
\bm Y^{(i+1)}
+
\sum_{k<i+1}
\bm Z^{(i+1)[k]}
\bm P_\mathrm{OLS}^{(k)}
\Big)
\Big\}
(\bm P_\mathrm{OLS}^{(j)})^\top
\Big].
\label{eqn:41}
\end{align}
1st term gives
\begin{align}
&\E_\text{all noises}
\Big[
-
[(\bm Z^{(i+1)})^\top \bm Z^{(i+1)} ]^{-1}
(\bm Z^{(i+1)})^\top
\bm Y^{(i+1)}
(\bm P_\mathrm{OLS}^{(j)})^\top
\Bigg]
\notag \\
&= 
\E_\text{all noises}
\Big[
-
[(\bm Z^{(i+1)})^\top \bm Z^{(i+1)} ]^{-1}
(\bm Z^{(i+1)})^\top
\bm Y^{(i+1)}
\Big\{
-
(\bm Y^{(j)})^\top
\notag \\ & \hspace{180pt}
-
\sum_{m < j}
(\bm P_\mathrm{OLS}^{(m)})^\top
\bm Z^{[m](j)}
\Big\}
\bm Z^{(j)}
[(\bm Z^{(j)})^\top \bm Z^{(j)} ]^{-1}
\Big]
\notag \\
&= 
\E_\text{all noises}
\Big[
\underbrace{
[(\bm Z^{(i+1)})^\top \bm Z^{(i+1)} ]^{-1}
(\bm Z^{(i+1)})^\top
\bm Y^{(i+1)}
(\bm Y^{(j)})^\top
\bm Z^{(j)}
[(\bm Z^{(j)})^\top \bm Z^{(j)} ]^{-1}
}_{(*1)}
\notag \\ &
+
\underbrace{
[(\bm Z^{(i+1)})^\top \bm Z^{(i+1)} ]^{-1}
(\bm Z^{(i+1)})^\top
\bm Y^{(i+1)}
\sum_{m < j}
(\bm P_\mathrm{OLS}^{(m)})^\top
\bm Z^{[m](j)}
\bm Z^{(j)}
[(\bm Z^{(j)})^\top \bm Z^{(j)} ]^{-1}
}_{(*2)}
\Big].
\end{align}
If $i+1 = j$, the first term only contribute because there is no $(\bm Y^{(i+1)})^\top$ in the summation $\sum_{m < j=i+1}$ of the second term, and it gives
\begin{align}
\E (*1)
=
[(\bm Z^{(i+1)})^\top \bm Z^{(i+1)} ]^{-1}
=
\frac{1}{N^{(i+1)}}
\hat{\bm \Lambda}_{(i+1)}
\label{eqn:43}
\end{align}
If $i+1 \neq j$, the first term vanish in the same reason.
In this case, the second term possibly contribute if $i+1 < j$.
By applying the recursive formula \cref{eqn:rec_supp} to $\bm P_\mathrm{OLS}^{(m)}$ repeatedly until $(\bm Y^{(i+1)})^\top$ appears , we get
\begin{align}
\E (*2)
=
\sigma^2 L
\sum_{m < j}
\sum_{m \leq l_1 < \dots < l_\spadesuit <j }
\frac{(-1)^{\spadesuit}}{N^{(i+1)}}
\hat{\bm \Lambda}_{(i+1)}
\hat{\bm \Lambda}^{[i+1](l_1)}
\hat{\bm \Lambda}_{(l_1)}
\dots
\hat{\bm \Lambda}_{(l_\spadesuit)}
\hat{\bm \Lambda}^{[l_\spadesuit](j)}
\hat{\bm \Lambda}_{(j)}.
\label{eqn:44}
\end{align}

On the other hand, the 2nd term in \cref{eqn:41} is
\begin{align}
&
\E_\text{all noises}
\Big[
-
[(\bm Z^{(i+1)})^\top \bm Z^{(i+1)}]^{-1}
(\bm Z ^{(i+1)})^\top
\sum_{k<i+1}
\bm Z^{(i+1)[k]}
\bm P_\mathrm{OLS}^{(k)}
(\bm P_\mathrm{OLS}^{(j)})^\top
\Big]
\notag \\
&=
\sum_{k<i+1}
(-1)
\hat{\bm \Lambda}_{(i+1)}
\hat{\bm \Lambda}^{(i+1)[k]}
\sigma^2 L
\sum_{
\substack{
m \leq k
\\
m \leq j
}}
\sum_{
\substack{
m\leq k_1<\cdots<k_{\heartsuit}<k
\\
m \leq l_1<\cdots<l_{\spadesuit}<j
}
}
\notag \\ & \qquad\qquad\qquad
\frac{(-1)^{\heartsuit + \spadesuit}}{N^{(m)}}
\hat{\bm \Lambda}_{(k)}
\hat{\bm \Lambda}^{(k)[k_{\heartsuit}]}
\hat{\bm \Lambda}_{(k_{\heartsuit} )}
\cdots
\hat{\bm \Lambda}^{(k_1)[m]}
\hat{\bm \Lambda}_{(m)}
\hat{\bm \Lambda}^{[m](l_1)}
\cdots
\hat{\bm \Lambda}_{(l_{\spadesuit} )}
\hat{\bm \Lambda}^{[l_{\spadesuit}](j)}
\hat{\bm \Lambda}_{(j)}
\notag \\
&=
\sigma^2 L
\sum_{
\substack{
m \leq k
\\
m \leq j
}}
\sum_{
\substack{
m\leq k_1<\cdots<k_{\heartsuit}<k<i+1
\\
m \leq l_1<\cdots<l_{\spadesuit}<j
}
}
\hat{\bm \Lambda}_{(i+1)}
\hat{\bm \Lambda}^{(i+1)[k]}
\notag \\ & \qquad\qquad\qquad
\frac{(-1)^{\heartsuit+1 + \spadesuit}}{N^{(m)}}
\hat{\bm \Lambda}_{(k)}
\hat{\bm \Lambda}^{(k)[k_{\heartsuit}]}
\hat{\bm \Lambda}_{(k_{\heartsuit} )}
\cdots
\hat{\bm \Lambda}^{(k_1)[m]}
\hat{\bm \Lambda}_{(m)}
\hat{\bm \Lambda}^{[m](l_1)}
\cdots
\hat{\bm \Lambda}_{(l_{\spadesuit} )}
\hat{\bm \Lambda}^{[l_{\spadesuit}](j)}
\hat{\bm \Lambda}_{(j)}
\notag \\
&= 
\sigma^2 L
\sum_{
\substack{
m \leq k
\\
m \leq j
}}
\sum_{
\substack{
m\leq k_1<\cdots<k_{\heartsuit}<k<i+1
\\
m \leq l_1<\cdots<l_{\spadesuit}<j
}
}
\notag \\ &
\frac{(-1)^{\heartsuit+1 + \spadesuit}}{N^{(m)}}
\hat{\bm \Lambda}_{(i+1)}
\hat{\bm \Lambda}^{(i+1)[k]}
\hat{\bm \Lambda}_{(k)}
\hat{\bm \Lambda}^{(k)[k_{\heartsuit}]}
\hat{\bm \Lambda}_{(k_{\heartsuit} )}
\cdots
\hat{\bm \Lambda}^{(k_1)[m]}
\hat{\bm \Lambda}_{(m)}
\hat{\bm \Lambda}^{[m](l_1)}
\cdots
\hat{\bm \Lambda}_{(l_{\spadesuit} )}
\hat{\bm \Lambda}^{[l_{\spadesuit}](j)}
\hat{\bm \Lambda}_{(j)}
\end{align}
This is close to what we want to show by renaming $k = k_{\heartsuit + 1}$, but it lacks contribution from $m = i+1$:
\begin{align}
\sigma^2 L
\sum_{m \leq j}
\sum_{m \leq l_1 < \dots < l_\spadesuit <j }
\frac{(-1)^{\spadesuit}}{N^{(i+1)}}
\hat{\bm \Lambda}_{(i+1)}
\hat{\bm \Lambda}^{[i+1](l_1)}
\hat{\bm \Lambda}_{(l_1)}
\dots
\hat{\bm \Lambda}_{(l_\spadesuit)}
\hat{\bm \Lambda}^{[l_\spadesuit](j)}
\hat{\bm \Lambda}_{(j)}
.
\end{align}
But is is compensated by \cref{eqn:43,eqn:44}, if $j=i+1$ and $j \neq i+1$ respectively, so we succeed in deriving the equation with $(i+1, j)$.

On $(i, j+1)$ case, we can show it by repeating the above argument in transposed version.
\end{proof}
Now we complete the derivation of the asymptotic form of $\E_\text{samples}[\bm P_\text{ISK} \bm P_\text{ISK}^\top]$.

The following proposition gives the optimal value of the risk of inductive skeleton fitting in the case of $D=2$.
\begin{prop}
Let $N$ be a natural number and $a, b$ be positive real numbers.
Let $f(x) = \frac{a}{x} + \frac{b}{N - x}$.
Assume $a > b$ and $\frac{a - \sqrt{ab}}{a - b} < 1$.
Then $\min \Set{f(x) | x \in \R, 0 < x < N} = \frac{a + b + 2 \sqrt{ab}}{N}$.
\end{prop}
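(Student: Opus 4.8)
The plan is to treat this as a routine one-variable calculus minimization on the open interval $(0,N)$, exploiting strict convexity to guarantee that the unique interior critical point is the global minimizer. First I would observe that $f$ is smooth on $(0,N)$ and that $f(x)\to+\infty$ as $x\to 0^+$ (because of the $a/x$ term) and as $x\to N^-$ (because of the $b/(N-x)$ term); hence the infimum is attained in the interior, where a minimizer must satisfy $f'(x)=0$.

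Next I would compute
\[
f'(x) = -\frac{a}{x^2} + \frac{b}{(N-x)^2},
\]
and solve $f'(x)=0$, i.e.\ $a(N-x)^2 = b\,x^2$. Since $x\in(0,N)$ and $a,b>0$, both $N-x$ and $x$ are positive, so taking positive square roots gives $\sqrt a\,(N-x) = \sqrt b\, x$, whence
\[
x^\ast = \frac{\sqrt a\, N}{\sqrt a + \sqrt b}, \qquad N - x^\ast = \frac{\sqrt b\, N}{\sqrt a + \sqrt b}.
\]

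To see that $x^\ast$ genuinely lies in $(0,N)$ — and to connect with the stated hypothesis — I would record the algebraic identity $\frac{a-\sqrt{ab}}{a-b} = \frac{\sqrt a(\sqrt a-\sqrt b)}{(\sqrt a-\sqrt b)(\sqrt a+\sqrt b)} = \frac{\sqrt a}{\sqrt a + \sqrt b} = x^\ast/N$, valid because $a\neq b$ licenses cancelling the factor $\sqrt a-\sqrt b$. Thus the assumption $\frac{a-\sqrt{ab}}{a-b}<1$ is exactly the statement $x^\ast<N$, while positivity of $x^\ast$ is immediate. Strict convexity then follows from $f''(x) = \frac{2a}{x^3}+\frac{2b}{(N-x)^3} > 0$ on $(0,N)$, so $x^\ast$ is the unique global minimizer and no boundary case can do better.

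Finally I would substitute the values of $x^\ast$ and $N-x^\ast$ to obtain
\[
f(x^\ast) = \frac{a(\sqrt a+\sqrt b)}{\sqrt a\, N} + \frac{b(\sqrt a+\sqrt b)}{\sqrt b\, N} = \frac{(\sqrt a+\sqrt b)(\sqrt a+\sqrt b)}{N} = \frac{(\sqrt a+\sqrt b)^2}{N} = \frac{a+b+2\sqrt{ab}}{N},
\]
which is the claimed minimum. There is no genuine obstacle here: the differentiation and substitution are mechanical. The only step needing a moment's care is recognizing that the opaque-looking hypothesis $\frac{a-\sqrt{ab}}{a-b}<1$ is precisely the domain condition $x^\ast<N$ after the cancellation above, with the strict inequality $a>b$ being exactly what permits dividing by $a-b=(\sqrt a-\sqrt b)(\sqrt a+\sqrt b)$.
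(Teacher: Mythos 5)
Your proof is correct and follows essentially the same route as the paper's: compute $f'$, locate the interior critical point, and substitute to obtain $(\sqrt{a}+\sqrt{b})^2/N$. If anything, yours is slightly more careful than the paper's version---you justify global minimality via the boundary blow-up of $f$ and $f''>0$, and you make explicit the identity $\frac{a-\sqrt{ab}}{a-b}=\frac{\sqrt{a}}{\sqrt{a}+\sqrt{b}}$ linking the hypothesis to the critical point $x^\ast = \frac{a-\sqrt{ab}}{a-b}N$, steps the paper leaves implicit after solving the quadratic $(a-b)x^2-2aNx+aN^2=0$.
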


\begin{proof}
At first, we consider the differential of $f(x)$.
We have
\[
    f'(x) = -\frac{(a - b) x^2 - 2aNx + aN^2}{x^2 (N - x)^2}.
\]
$f'(x) = 0$ if and only if $x = \frac{a \pm \sqrt{ab}}{a - b}N$.
By the assumptions, $f(x)$ takes the minimal value at $x = \frac{a - \sqrt{ab}}{a - b}N$.
Hence, we have
\[
    \min \Set{f(x) | x \in \R, 0 < x < N} = f(\frac{a - \sqrt{ab}}{a - b} N) = \frac{a + b + 2 \sqrt{ab}}{N}.
\]
\end{proof}

\section{All experiments}\label{sec:numerical-experiments}
This section provides all experiments we conducted and their detailed settings for completeness and reproducibility.

\subsection{Synthetic instances}
First of all, we present detailed experimental settings and results of synthetic instances described in \cref{sec:synthetic-instances} of the main paper.

\subsubsection{Data generation}
We consider the following data generating process to create synthetic instances.
Given parameters $(L, M, N)$, we first define true control points $\bm p_{\bm d}\in \R^L~(\bm d \in \N^M_D)$ as follows:
\begin{equation}\label{eqn:def_control_points}
    \bm p_{\bm d} := \sum_{j=1}^M \frac{d_j}{D} \bm e_j,
\end{equation}
where $\bm e_j \in \R^L~(j = 1, \dots, M)$ is a unit vector whose $j$-th element is one and the others are zeros.
The B\'ezier simplex defined by \cref{eqn:def_control_points} is a unit $(M-1)$-simplex on $\R^L$ and their $M$ vertices are $\bm e_j~(j = 1, \dots, M)$.

Next, we randomly generated $N$ training points $\set{(\bm t_n, \bm x_n)}_{n = 1}^N$
for the all-at-once fitting and the inductive skeleton fitting respectively.
For the all-at-once fitting, we generated parameters $\bm t_n~(n = 1, \dots, N)$ randomly from an uniform distribution $U(\Delta^{M-1})$, and set up $\bm x_n$ as follows:
\begin{equation}\label{eqn:exp_add_noise}
    \bm x_n = \sum_{\bm d \in \N^M_D} \binom{D}{\bm d} \bm t_n^{\bm d} \bm p_{\bm d} + \bm \varepsilon_n \quad (n = 1, \dots, N),
\end{equation}
where $\bm \varepsilon_n~(n = 1, \dots, N)$ is a noise generated from a normal distribution $N(\bm 0, 0.1^2 \bm I)$.

For the inductive skeleton on the other hand, we set up training points for each dimensional subsimplex of $\Delta^{(m-1)} = \cup_{\card{\bm I} = m} \Delta^{\bm{I}}~(m = 1, \dots, M)$ to be learned.
First, we decoupled $N$ into $N^{(m)}~(m = 0, \dots, M-1)$.
To obtain parameters $\bm{t}^{(m)}_n~(n = 1, \dots, N^{(m)})$ from $\Delta^{(m)} = \cup_{\card{I} = m} \Delta^{\bm{I}}$, we further divided $N^{(m)}$ by the number of $m$-subsimplices $\Delta^{\bm{I}}~(\card{I} = m)$, and generated parameters of equal size from each uniform distribution $U(\Delta^{\bm{I}})$.
Then we produced training points $\bm x^{(m)}_n~(n = 1, \dots, N^{(m)})$ in the same way as \cref{eqn:exp_add_noise} and obtained training points $\set{(\bm x_n^{(m)}, \bm t_n^{(m)})}_{n=1}^{N^{(m)}}$ for each $(m)$-skeleton.

\subsubsection{Experimental settings}
Experiments were conducted on the following tuple $(N, M, L)$ with $D \in \Set{2, 3}$:
\begin{itemize}
    \item  $(M, L) = (8, 100)$ and $N \in \Set{250, 500, 1000, 2000}$;
    \item  $(N, L) = (1000, 100)$ and $M \in \Set{3, 4, 5, 6, 7, 8}$;
    \item  $(M, N) = (8, 1000)$ and $ M \in \Set{8, 25, 50, 100}$.
\end{itemize}

To verify the asymptotic risks derived in Section 3.1 and Section 3.2, we compared the following three methods:
\begin{description}
    \item[all-at-once] the all-at-once fitting (Section 3.1);
    \item[inductive skeleton (non-optimal)] the inductive skeleton fitting (Section 3.2) with $N^{(0)} = \dots = N^{(M-1)} = N / M$, which is not the minimizer of the risk of the inductive skeleton fitting;
    \item[inductive skeleton (optimal)] the inductive skeleton fitting (Section 3.2) where $N^{(0)}, N^{(1)}, \dots, N^{(M-1)}$ are determined by minimizing the risk of the inductive skeleton fitting under the constraints $\sum_{m = 0}^{M-1} N^{(m)} = N$ and $N^{(m)}\geq 0~(m = 0, \dots, M-1)$. The actual sample sizes $N^{(m)}$ for each $(M,D)$ are described in \cref{tab:optimal-subsample-ratio}.
    \Cref{tab:optimal-subsample-ratio} shows the optimal solutions of $N^{(m)}$ for each pairs $(M, D)$.
\end{description}
\begin{table}[ht]
    \centering
    \caption{Optimal subsample ratio for inductive skeleton fitting ($D$: degree of B\'ezier simplex, $M$: dimension of B\'ezier simplex, $N$: total sample size, $N^{(m)}$: sample size of $m$-skeleton).}
    \label{tab:optimal-subsample-ratio}
    \begin{tabular}{cccccccc}
        \toprule
        $D$     &$N^{(m)}$ &$M = 3$  &$M = 4$  &$M = 5$  &$M = 6$  &$M = 7$  &$M = 8$ \\ \midrule
        $2$     &$N^{(0)}$ &0.262$N$ &0.229$N$ &0.228$N$ &0.233$N$ &0.238$N$ &0.242$N$\\
                &$N^{(1)}$ &0.738$N$ &0.771$N$ &0.772$N$ &0.767$N$ &0.762$N$ &0.758$N$\\  \hline
        $3$     &$N^{(0)}$ &0.118$N$ &0.083$N$ &0.066$N$ &0.058$N$ &0.052$N$ &0.051$N$\\
                &$N^{(1)}$ &0.576$N$ &0.444$N$ &0.547$N$ &0.577$N$ &0.589$N$ &0.613$N$\\
                &$N^{(2)}$ &0.306$N$ &0.473$N$ &0.387$N$ &0.365$N$ &0.359$N$ &0.336$N$\\
        \bottomrule
    \end{tabular}
\end{table}

In this experiment, we estimated the B\'ezier simplex with degree $D = 2$ and 3 respectively.

\subsubsection{All results}\label{sec:results}
\Cref{fig:mse_vsN} shows box plots of MSEs over 20 trials and our theoretical risks for both the all-at-once fitting and the inductive skeleton fitting for each $N \in \Set{250, 500, 1000, 2000}$ with $(L, M) = (100, 8)$ and $D \in \Set{2, 3}$.
We observed that these figures empirically show that our theoretical risks are correct for both $D = 2$ and 3, and the gap between the actual MSEs and the risks are sufficiently small at $N = 1000$.
For both $D = 2$ and 3, the inductive skeleton (optimal) always achieved lower MSEs than that of the inductive skeleton (non-optimal).
This result suggests the efficiency of optimizing the risk of the inductive skeleton fitting with respect to the sample sizes $N^{(m)}$ of each dimension.
In addition, the inductive skeleton fitting (optimal) also outperformed the all-at-once fitting in the case of $D = 2$.
This result supports the discussion described in Section 3.3.
\begin{figure}[ht]
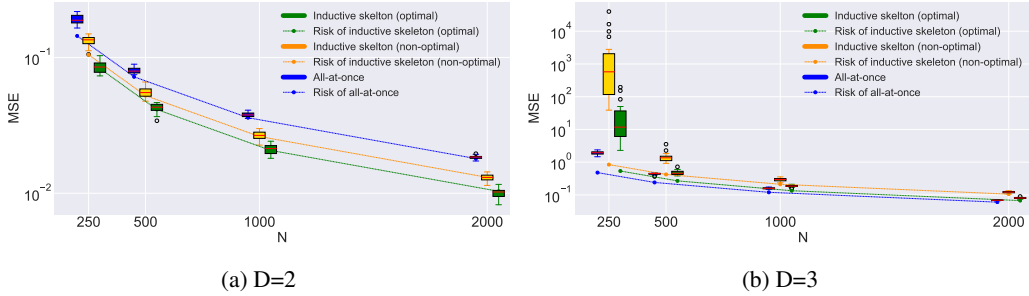

 \begin{minipage}{0.49\hsize}
        \centering
    \includegraphics[width=1\textwidth]{fig/D=2_M=8_L=100.pdf}
    \subcaption{D=2}
    \label{fig:mse_vsN_d2}
 \end{minipage}
 \begin{minipage}{0.49\hsize}
        \centering
    \includegraphics[width=1\textwidth]{fig/D=3_M=8_L=100.pdf}
    \subcaption{D=3}
    \label{fig:mse_vsN_d3}
 \end{minipage}
 \caption{Sample size $N$ vs. MSE with $(L,M)=(0.1,100,8)$ (boxplots over 20 trials and theoretical risks).}
 \label{fig:mse_vsN}
\end{figure}

\Cref{fig:mse_vsM} shows box plots of MSEs over 20 trials and our theoretical risks for each $M \in \Set{3, 4, 5, 6, 7, 8}$ with $(L, N) = (100, 1000)$.
As well as \cref{fig:mse_vsN}, the inductive skeleton fitting always outperformed the all-at-once fitting in the case of $D = 2$.
Furthermore, the difference of MSEs between the inductive skeleton fitting and the all-at-once fitting gets wider as $M$ grows.
This suggests that the inductive skeleton fitting with $D = 2$ approximates more effectively than the all-at-once fitting does for a high-dimensional $M$.
\begin{figure}[ht]
 \begin{minipage}{0.49\hsize}
        \centering
    \includegraphics[width=1\textwidth]{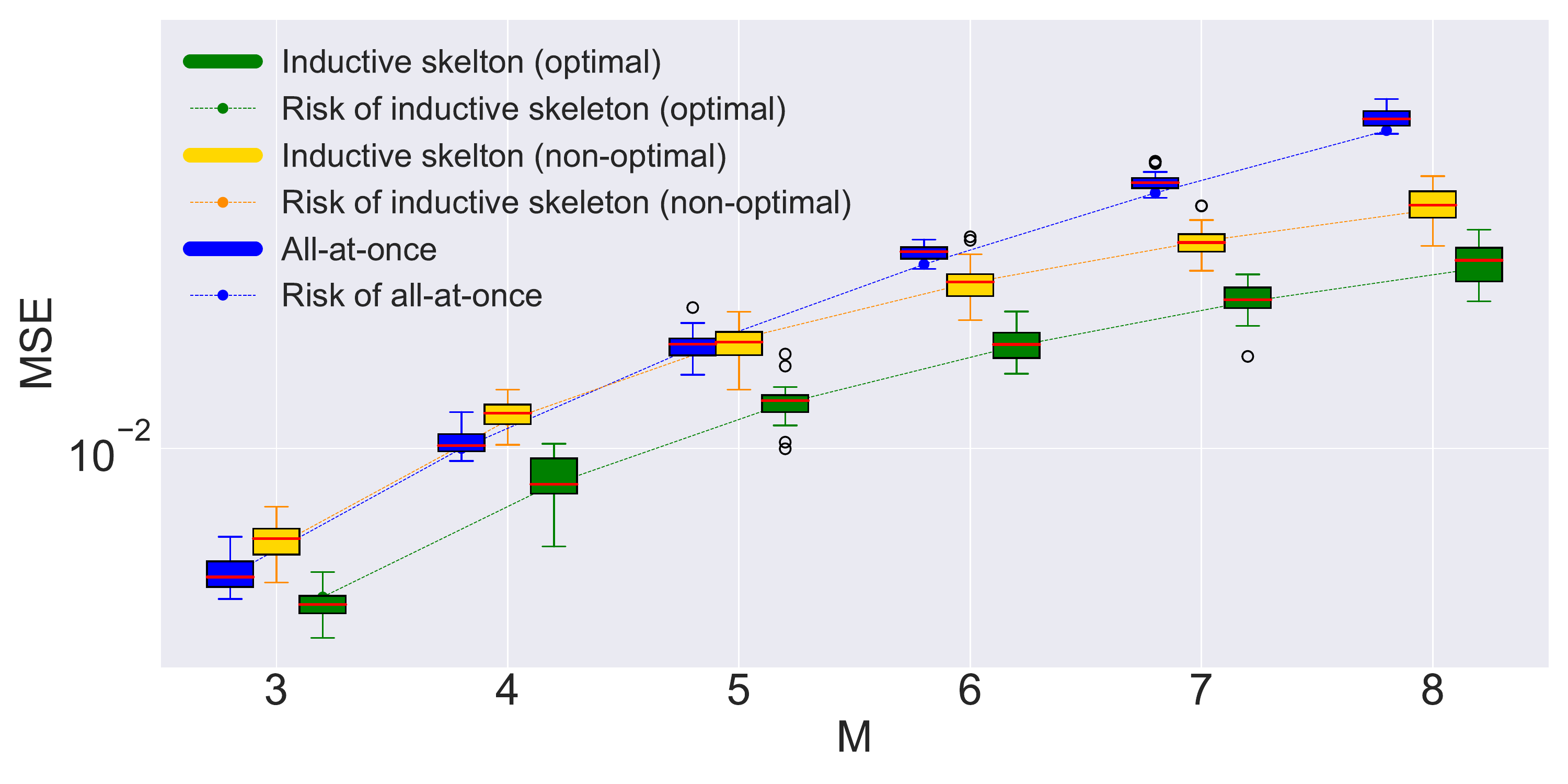}
    \subcaption{$D = 2$}    \label{fig:mse_vsM_d2}
 \end{minipage}
 \begin{minipage}{0.49\hsize}
        \centering
    \includegraphics[width=1\textwidth]{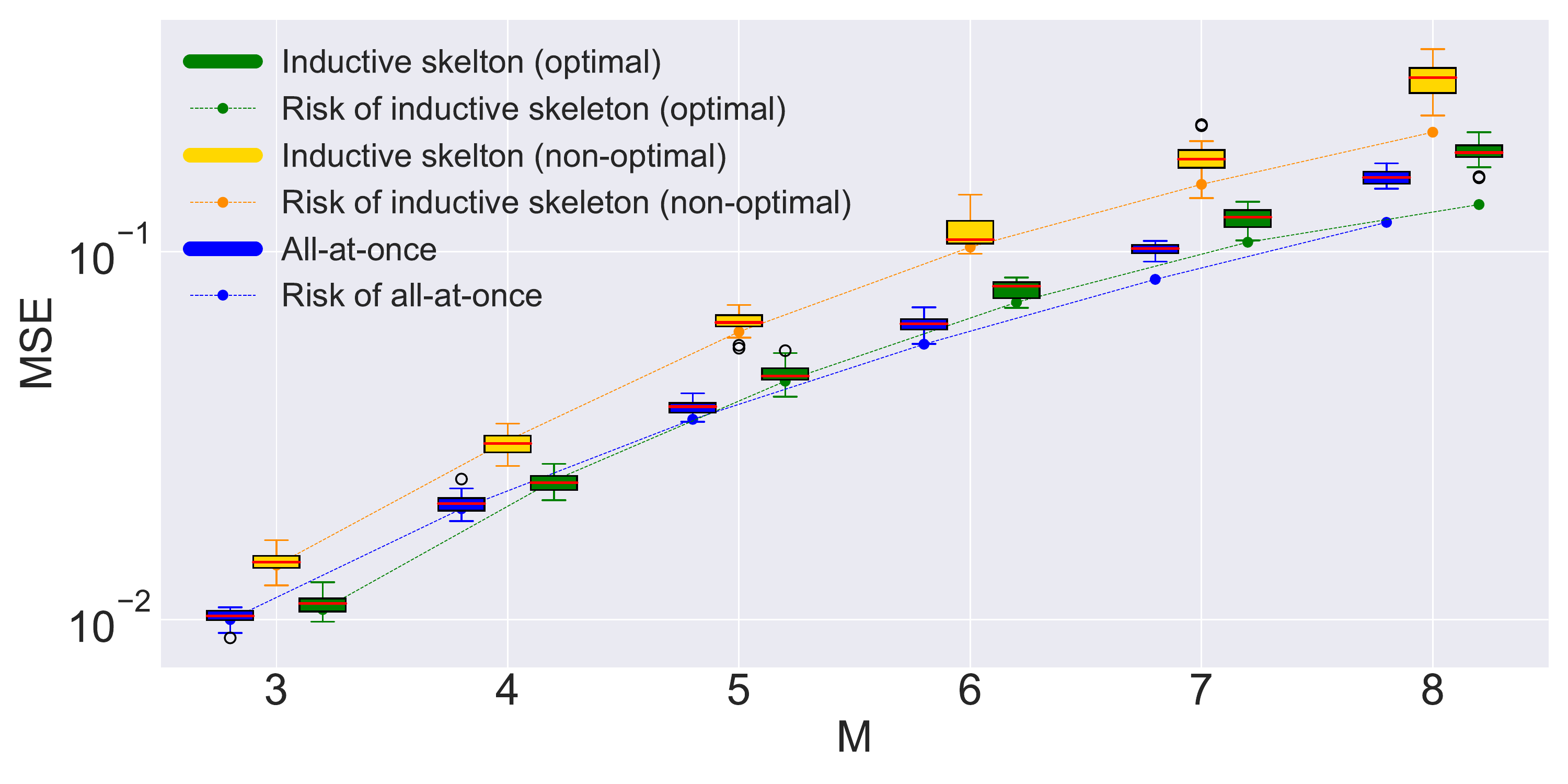}
    \subcaption{$D = 3$}    \label{fig:mse_vsM_d3}
 \end{minipage}
 \caption{Dimension of a simplex $M$ vs. MSE with $(L, N) = (100, 1000)$ (boxplots over 20 trials and theoretical risks).}
 \label{fig:mse_vsM}
\end{figure}

\Cref{fig:mse_vsL} shows box plots of MSEs over 20 trials and our theoretical risks for each $L\in \Set{8, 25, 50, 100}$ and $D = 2, 3$ with $(M, N)=(8, 1000)$.
As well as \cref{fig:mse_vsN,fig:mse_vsM}, the inductive skeleton fitting always outperform the all-at-once fitting in the case of $D = 2$.

\begin{figure}[ht]
 \begin{minipage}{0.49\hsize}
        \centering
    \includegraphics[width=1\textwidth]{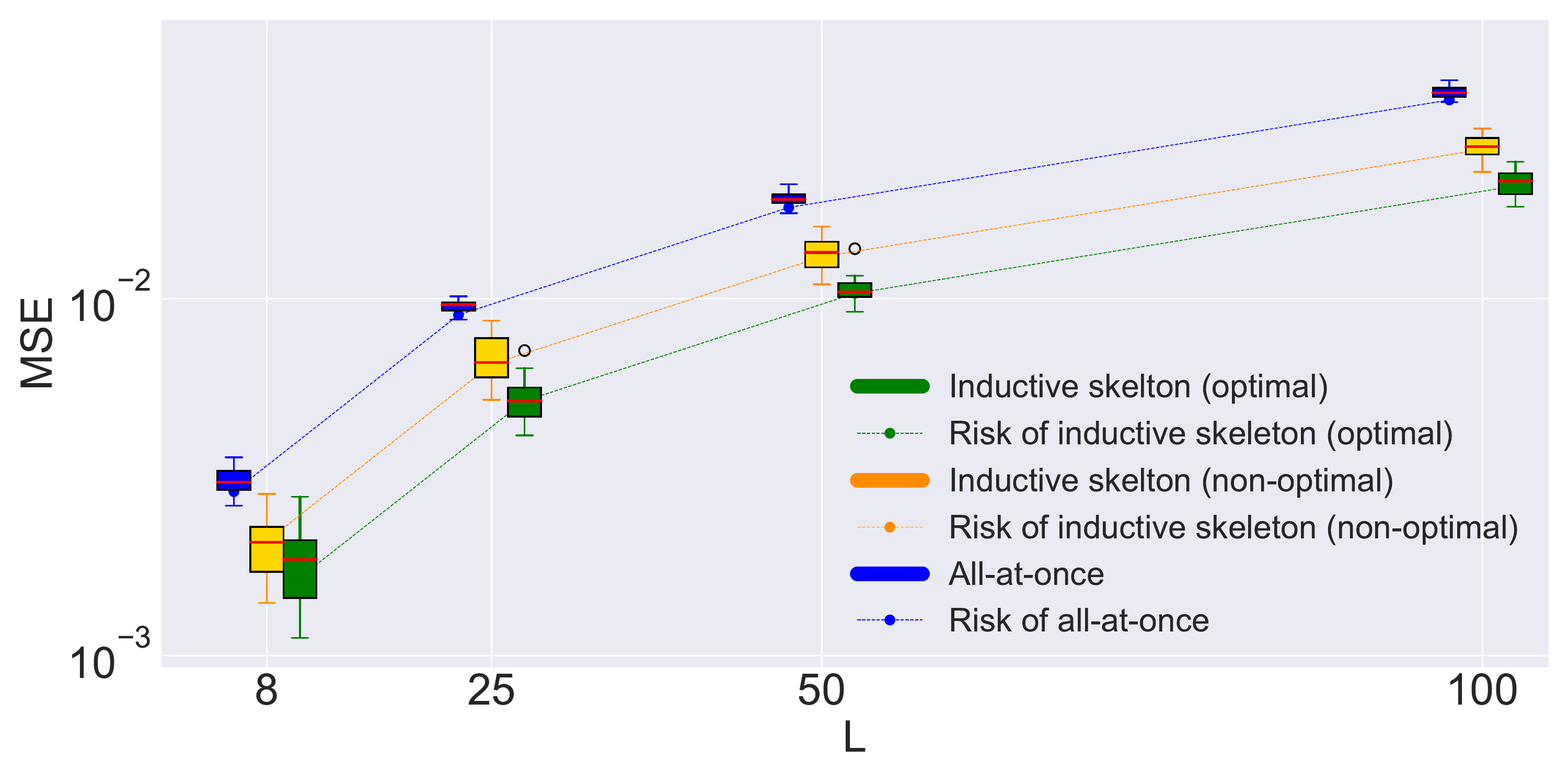}
    \subcaption{$D = 2$}
    \label{fig:mse_vsL_d2}
 \end{minipage}
 \begin{minipage}{0.49\hsize}
        \centering
    \includegraphics[width=1\textwidth]{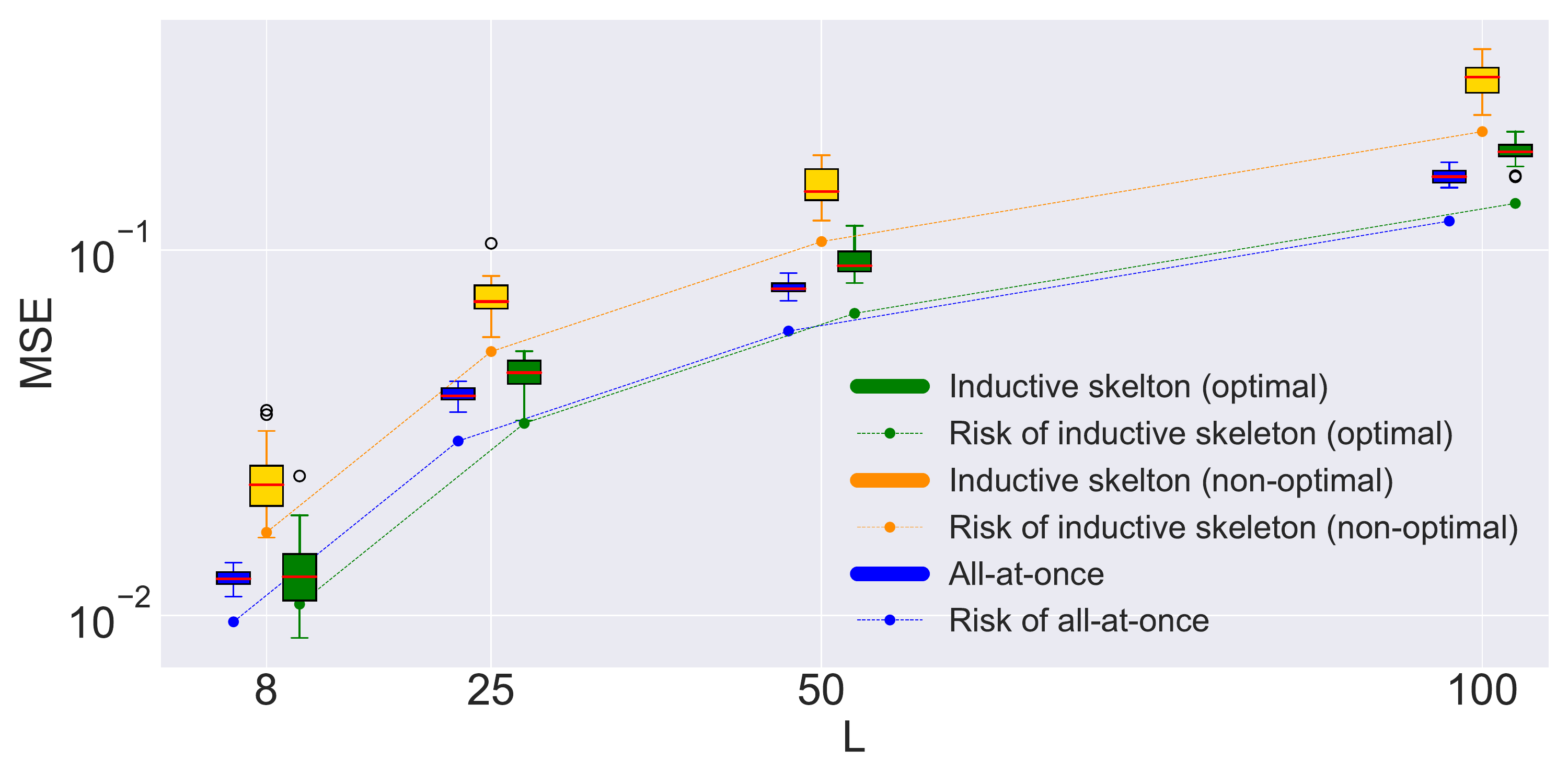}
    \subcaption{$D = 3$}
    \label{fig:mse_vsL_d3}
 \end{minipage}
 \caption{Dimension of control points $L$ vs. MSE with sample size $(M, N) = (3, 8, 1000)$ (boxplots over 20 trials and theoretical risks).}
\label{fig:mse_vsL}
\end{figure}

\subsection{Multi-objective optimization instances}\label{appd:sec:MOP-instances}
Next, we describe the experiment protocol for the multi-objective optimization instances used in \cref{sec:MOP-instances} of the main paper.
We only present the data generation process since the fitting and evaluation part are the same as the synthetic cases.

\subsubsection{A generalized location problem}\label{sec:location-problem}
We generalized the multi-objective location problem \cite{Kuhn1967} to a higher dimension:
\begin{equation}
\begin{split}
\text{minimize } & f(x) = (f_1(x), f_2(x), f_3(x)) \text{ subject to }x \in \R^4\\
\text{where }    & f_m(x) = \norm{x - e_m}^2 \quad (m = 1, \dots, 3)\\
                 & e_1 = (1,0,0,0),\ e_2 = (0,1,0,0),\ e_3 = (0,0,1,0).
\end{split}
\end{equation}
Note that this is a special case of the MED benchmark problem \cite{Hamada2010}.
The MED problem is simplicial \cite{Hamada2017} and its Pareto set is known to be the convex hull of the minimizers of separate objective functions, i.e., the 2-simplex spanned by $e_1, e_2, e_3$.
For each vertex, edge, face of this simplex, which is the Pareto set of each 1-, 2-, 3-objective subproblem, we generate a subsample according to the uniform distribution on it.

\subsubsection{The group lasso}\label{sec:group-lasso}
We applied the B\'ezier simplex fittings to estimation of the hyper-parameter space of a sparse regression method.
The dataset used in this experiment was \texttt{Birthwt} in the R-package \texttt{MASS}, which contains 189 births at the Baystate Medical Centre, Springfield, Massachusetts during 1986 \cite{Hosmer1989,Venables2002}.
From the dataset, we adopted six continuous features \texttt{age1}, \texttt{age2}, \texttt{age3}, \texttt{lwt1}, \texttt{lwt2}, \texttt{lwt3} as predictors and one continuous feature \texttt{bwt} as a response for regression analysis.
Since the predictors are classified into two groups, \texttt{age} and \texttt{lwt}, the group lasso~\cite{Yuan2006} was employed.

Put $N=189$ and $M=6$.
Let $A$ be an $N \times M$ matrix of observations of the predictors, $x \in \R^M$ be a row vector of the predictor coefficients to be estimated, separated into two groups $x_\text{age} = (x_1, x_2, x_3)^\top$ and $x_\text{lwt} = (x_4, x_5, x_6)^\top$, and $y \in \R^N$ be a row vector of observations of the response.
The group lasso regressor is the solution to the following problem:
\begin{equation}\label{eqn:group-lasso}
\text{minimize } \frac{1}{2N} \norm{Ax - y}^2 + \frac{\lambda}{\sqrt{3}} \paren{\norm{x_\text{age}} + \norm{x_\text{lwt}}} \text{ subject to } x \in \R^6
\end{equation}
where $\norm{\cdot}$ is the Euclidean norm, and $\lambda$ is a positive number to be tuned by users.
This original form suffers from two drawbacks:
\begin{itemize}
    \item Choosing an appropriate value for $\lambda$ involves a grid search on an unbounded domain.
    \item Since two groups have physically different units of measurement, same weights are not always appropriate even if their values are normalized.
\end{itemize}

Instead, we consider each term in \cref{eqn:group-lasso} as a separate objective function:
\begin{equation}\label{eqn:group-lasso-mop}
\begin{split}
    \text{minimize } & f(x)   = (f_1(x), f_2(x), f_3(x)) \text{ subject to } x \in \R^6\\
    \text{where }    & f_1(x) = \norm{Ax - y}^2,\ f_2(x) = \norm{x_\text{age}}^2,\ f_3(x) = \norm{x_\text{lwt}}^2.
\end{split}
\end{equation}
Notice that the use of the squared norm in $f_2$ and $f_3$ does not change their solutions.
It is easy to see that every objective function in \cref{eqn:group-lasso-mop} is convex but not strongly convex.
We make them strongly convex by the following perturbation:
\begin{align*}
    \tilde f_1 &= f_1 + \varepsilon \norm{x}^2,\\
    \tilde f_2 &= f_2 + \varepsilon \norm{x}^2,\\
    \tilde f_3 &= f_3 + \varepsilon \norm{x}^2
\end{align*}
where $\varepsilon$ is an arbitrarily small positive number (we set $\varepsilon=10^{-4}$).
Now the problem minimizing a mapping $\tilde f= (\tilde f_1, \tilde f_2, \tilde f_3)$ is strongly convex.
By \cite[Theorems 1.1 and 3.1]{Hamada2019}, this problem is weakly simplicial and the mapping
\begin{equation}\label{eqn:group-lasso-sop}
    x^*(w) = \arg\min_x \inprod{w}{f(x)}
\end{equation}
is well-defined and continuous on $\Delta^2$, satisfying $x^*(\Delta^2_I) = X^*(\tilde f_I)$ for all $I \subseteq \set{1,2,3}$.

Then, we obtained subsamples by solving \cref{eqn:group-lasso-sop} repeatedly with varying $w \in \Delta^2_I$ for each $I \subseteq \set{1, 2, 3}$.
For each such $I$, the weight $w$ was drawn from the uniform distribution on $\Delta^2_I$ and the problem \cref{eqn:group-lasso-sop} was solved by the steepest descent method.

The same idea can be applied to a broad range of sparse learning methods, including the original lasso \cite{Tibshirani1996}, the fused lasso \cite{Tibshirani2005}, the smooth lasso \cite{Hebiri2011}, and the elastic net \cite{Zou2005}.
For those methods, their group-wise regularization terms can be considered as separate objectives, and the resulting problems would be many-objective (four-objective or more) where the all-at-once fitting will much outperform over the inductive skeleton fitting.
We however remark that the bridge regression \cite{Frank1993} is not the case since its regularization term using a nonconvex $\ell_p$-norm (i.e., $p < 1$) cannot change into a strongly convex function via perturbations.

\subsubsection{The Pareto fronts of each multi-objective optimization problem}\label{sec:Pareto-fronts}
\Cref{fig:Pareto-fronts} shows the Pareto fronts of the location problem and the group lasso.
From \cref{fig:Pareto-fronts}, we can see that the Pareto front of the location problem can be represented by a B\'ezier simplex of degree $D=2$.
For the group lasso on the other hand, its Pareto front cannot be represented by a a B\'ezier simplex of degree $D=2$ but of $D=3$.
\begin{figure}[ht]
 \begin{minipage}{0.49\hsize}
        \centering
    \includegraphics[width=1\textwidth]{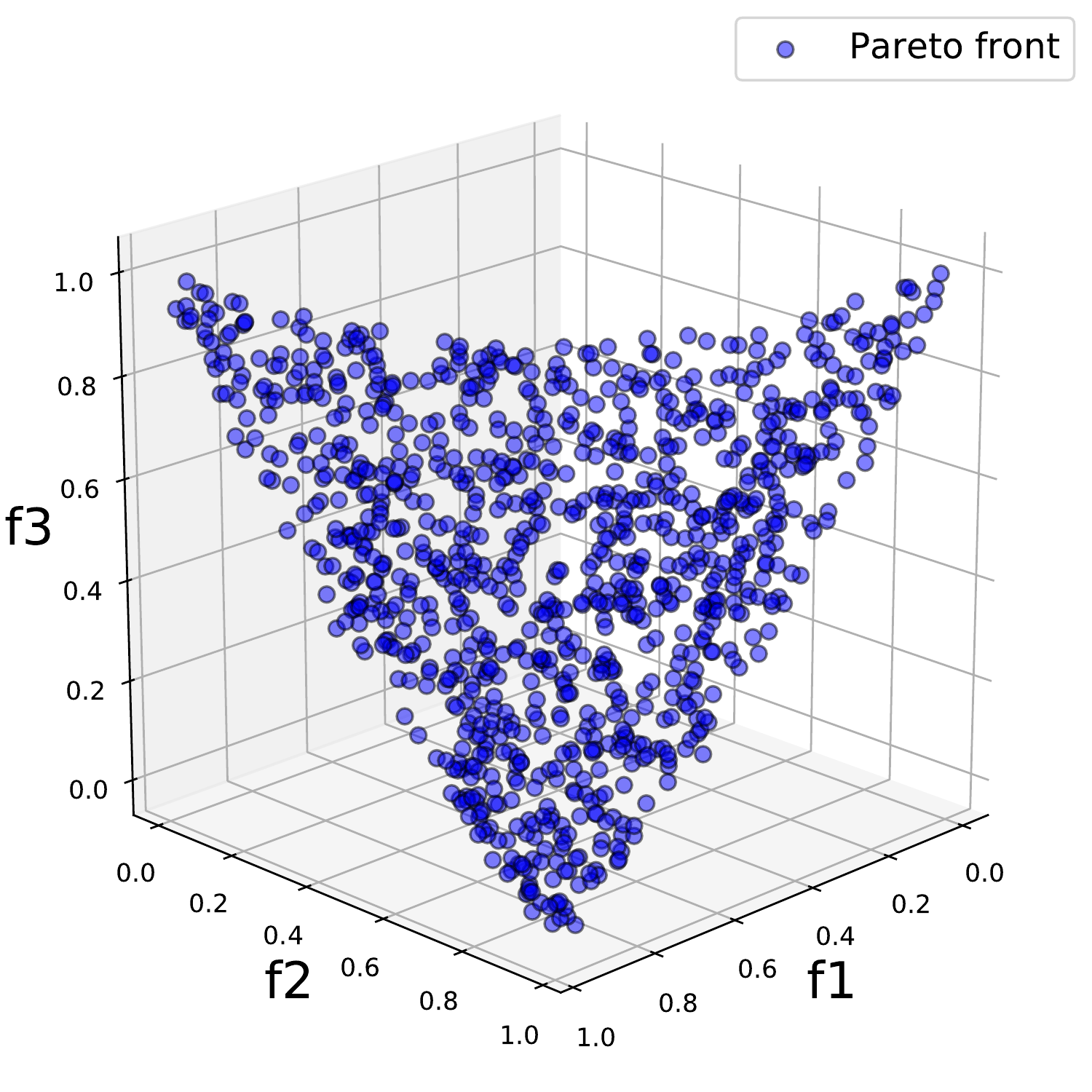}
    \subcaption{Location problem}
    \label{fig:Pareto-front-location-problem}
 \end{minipage}
 \begin{minipage}{0.49\hsize}
        \centering
    \includegraphics[width=1\textwidth]{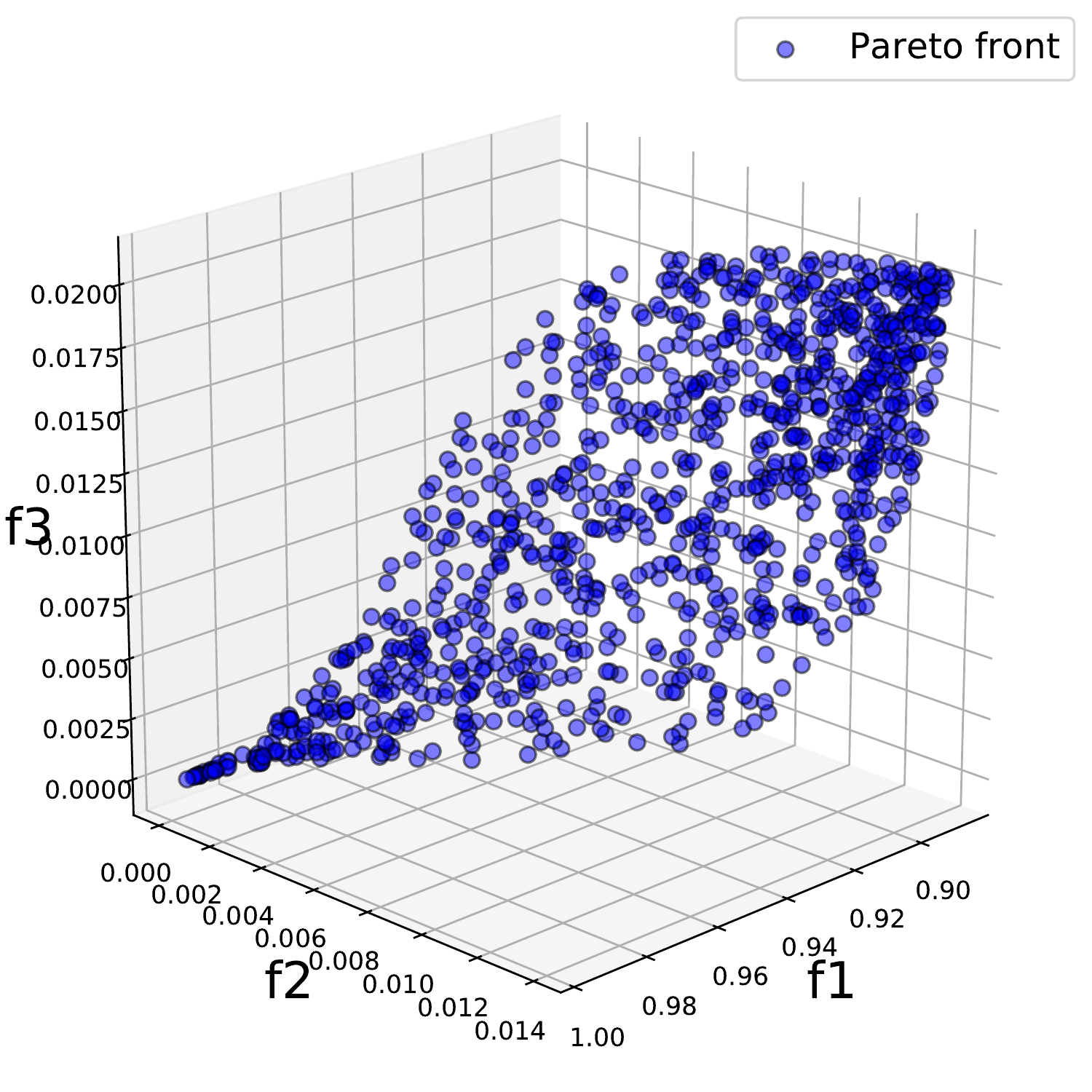}
    \subcaption{Group lasso}
    \label{fig:Pareto-front-group-lasso}
 \end{minipage}
 \caption{The Pareto fronts of the location problem and the group lasso.}
\label{fig:Pareto-fronts}
\end{figure}


\begin{thebibliography}{100}

\bibitem{Borges2002}
Carlos~F. Borges and Tim Pastva.
\newblock Total least squares fitting of {B\'ezier} and {B-spline} curves to
  ordered data.
\newblock {\em Computer Aided Geometric Design}, 19(4):275--289, 2002.

\bibitem{Deb2001}
K.~Deb.
\newblock {\em Multi-objective Optimization Using Evolutionary Algorithms}.
\newblock John Wiley \& Sons, Inc., New York, NY, USA, 2001.

\bibitem{Deb2014}
K.~Deb and H.~Jain.
\newblock An evolutionary many-objective optimization algorithm using
  reference-point-based nondominated sorting approach, part {I}: Solving
  problems with box constraints.
\newblock {\em IEEE Transactions on Evolutionary Computation}, 18(4):577--601,
  2014.

\bibitem{Eichfelder2008}
G.~Eichfelder.
\newblock {\em Adaptive Scalarization Methods in Multiobjective Optimization}.
\newblock Springer-Verlag, Berlin, Heidelberg, 2008.

\bibitem{Hamada2017}
Naoki Hamada.
\newblock Simple problems: The simplicial gluing structure of {Pareto} sets and
  {Pareto} fronts.
\newblock In {\em Proceedings of the Genetic and Evolutionary Computation
  Conference Companion}, GECCO '17, pages 315--316, New York, NY, USA, 2017.
  ACM.

\bibitem{Hamada2019}
Naoki Hamada, Kenta Hayano, Shunsuke Ichiki, Yutaro Kabata, and Hiroshi
  Teramoto.
\newblock Topology of {Pareto} sets of strongly convex problems.
\newblock {\em ArXiv e-prints}, 2019.
\newblock \url{http://arxiv.org/abs/1904.03615}.

\bibitem{Hamada2010}
Naoki Hamada, Yuichi Nagata, Shigenobu Kobayashi, and Isao Ono.
\newblock Adaptive weighted aggregation: A multiobjective function optimization
  framework taking account of spread and evenness of approximate solutions.
\newblock In {\em Proceedings of the 2010 IEEE Congress on Evolutionary
  Computation}, CEC 2010, pages 787--794, 2010.

\bibitem{Harada2007}
K.~Harada, J.~Sakuma, S.~Kobayashi, and I.~Ono.
\newblock Uniform sampling of local {Pareto}-optimal solution curves by
  {Pareto} path following and its applications in multi-objective {GA}.
\newblock In {\em Proceedings of the Genetic and Evolutionary Computation
  Conference (GECCO)}, pages 813--820, New York, NY, USA, 2007. ACM.

\bibitem{Harada2006}
Ken Harada, Jun Sakuma, and Shigenobu Kobayashi.
\newblock Local search for multiobjective function optimization: {Pareto}
  descent method.
\newblock In {\em Proceedings of the 8th Annual Conference on Genetic and
  Evolutionary Computation}, GECCO '06, pages 659--666, New York, NY, USA,
  2006. ACM.

\bibitem{Hebiri2011}
Mohamed Hebiri and Sara van~de Geer.
\newblock The smooth-lasso and other $\ell_1 + \ell_2$-penalized methods.
\newblock {\em Electron. J. Statist.}, 5:1184--1226, 2011.

\bibitem{Hernandez-Lobato2016}
Daniel Hernandez-Lobato, Jose Hernandez-Lobato, Amar Shah, and Ryan Adams.
\newblock Predictive entropy search for multi-objective bayesian optimization.
\newblock In Maria~Florina Balcan and Kilian~Q. Weinberger, editors, {\em
  Proceedings of The 33rd International Conference on Machine Learning},
  volume~48 of {\em Proceedings of Machine Learning Research}, pages
  1492--1501, New York, New York, USA, 2016. PMLR.

\bibitem{Hillermeier2001}
C.~Hillermeier.
\newblock {\em Nonlinear Multiobjective Optimization: A Generalized Homotopy
  Approach}, volume~25 of {\em International Series of Numerical Mathematics}.
\newblock Birkh\"{a}user Verlag, Basel, Boston, Berlin, 2001.

\bibitem{Hosmer1989}
David~W Hosmer and Stanley Lemeshow.
\newblock {\em Applied Logistic Regression}.
\newblock Wiley, New York, 1989.

\bibitem{Kobayashi2019}
Ken Kobayashi, Naoki Hamada, Akiyoshi Sannai, Akinori Tanaka, Kenichi Bannai,
  and Masashi Sugiyama.
\newblock B\'ezier simplex fitting: Describing {Pareto} fronts of simplicial
  problems with small samples in multi-objective optimization.
\newblock In {\em Proceedings of the Thirty-Third {AAAI} Conference on
  Artificial Intelligence}, AAAI-19, to appear.

\bibitem{Kuhn1967}
Harold~W Kuhn.
\newblock On a pair of dual nonlinear programs.
\newblock {\em Nonlinear Programming}, 1:38--45, 1967.

\bibitem{Frank1993}
lldiko E.~Frank and Jerome~H. Friedman.
\newblock A statistical view of some chemometrics regression tools.
\newblock {\em Technometrics}, 35(2):109--135, 1993.

\bibitem{Mastroddi2013}
F.~Mastroddi and S.~Gemma.
\newblock Analysis of {Pareto} frontiers for multidisciplinary design
  optimization of aircraft.
\newblock {\em Aerosp. Sci. Technol.}, 28(1):40--55, 2013.

\bibitem{Miettinen1999}
Kaisa~M. Miettinen.
\newblock {\em Nonlinear Multiobjective Optimization}, volume~12 of {\em
  International Series in Operations Research \& Management Science}.
\newblock Springer-Verlag, GmbH, 1999.

\bibitem{Shoval2012}
O.~Shoval, H.~Sheftel, G.~Shinar, Y.~Hart, O.~Ramote, A.~Mayo, E.~Dekel,
  K.~Kavanagh, and U.~Alon.
\newblock Evolutionary trade-offs, {Pareto} optimality, and the geometry of
  phenotype space.
\newblock {\em Science}, 336(6085):1157--1160, 2012.

\bibitem{Tibshirani1996}
Robert Tibshirani.
\newblock Regression shrinkage and selection via the lasso.
\newblock {\em Journal of the Royal Statistical Society. Series B
  (Methodological)}, 58(1):267--288, 1996.

\bibitem{Tibshirani2005}
Robert Tibshirani, Michael Saunders, Saharon Rosset, Ji~Zhu, and Keith Knight.
\newblock Sparsity and smoothness via the fused lasso.
\newblock {\em Journal of the Royal Statistical Society: Series B (Statistical
  Methodology)}, 67(1):91--108, 2005.

\bibitem{Venables2002}
W.~N. Venables and B.~D. Ripley.
\newblock {\em Modern Applied Statistics with {S}}.
\newblock Springer, fourth edition, 2002.

\bibitem{Vrugt2003}
Jasper~A. Vrugt, Hoshin~V. Gupta, Luis~A. Bastidas, Willem Bouten, and Soroosh
  Sorooshian.
\newblock Effective and efficient algorithm for multiobjective optimization of
  hydrologic models.
\newblock {\em Water Resources Research}, 39(8):1214--1232, 2003.

\bibitem{Yang2019}
Kaifeng Yang, Michael Emmerich, Andr\'e Deutz, and Thomas B\"ack.
\newblock Multi-objective {Bayesian} global optimization using expected
  hypervolume improvement gradient.
\newblock {\em Swarm and Evolutionary Computation}, 44:945--956, 2019.

\bibitem{Yuan2006}
Ming Yuan and Yi~Lin.
\newblock Model selection and estimation in regression with grouped variables.
\newblock {\em Journal of the Royal Statistical Society: Series B (Statistical
  Methodology)}, 68(1):49--67, 2006.

\bibitem{Zhang2007}
Q.~Zhang and H.~Li.
\newblock {MOEA/D}: A multiobjective evolutionary algorithm based on
  decomposition.
\newblock {\em IEEE Transactions on Evolutionary Computation}, 11(6):712--731,
  2007.

\bibitem{Zou2005}
Hui Zou and Trevor Hastie.
\newblock Regularization and variable selection via the elastic net.
\newblock {\em Journal of the Royal Statistical Society. Series B (Statistical
  Methodology)}, 67(2):301--320, 2005.

\end{thebibliography}
\end{document}